\let\oldnl\nl
\newcommand{\nonl}{\renewcommand{\nl}{\let\nl\oldnl}}
\DeclareMathOperator*{\argmax}{arg\,max}
\begin{document}
\title{Composite Q-learning: Multi-scale Q-function Decomposition and Separable Optimization}

\author{\name Gabriel Kalweit \email kalweitg@cs.uni-freiburg.de \\
       \addr Neurorobotics Lab\\
       Department of Computer Science\\
       University of Freiburg,
       Germany
       \AND
       \name Maria Huegle \email hueglem@cs.uni-freiburg.de \\
       \addr Neurorobotics Lab\\
       Department of Computer Science\\
       University of Freiburg, Germany
       \AND
       \name Joschka Boedecker \email jboedeck@cs.uni-freiburg.de \\
       \addr Neurorobotics Lab\\
       Department of Computer Science\\
       University of Freiburg, Germany}

\editor{}

\maketitle

\begin{abstract}
In the past few years, off-policy reinforcement learning methods have shown promising results in their application for robot control. Deep Q-learning, however, still suffers from poor data-efficiency and is susceptible to stochasticity in the environment or reward functions which is limiting with regard to real-world applications. We alleviate these problems by proposing two novel off-policy Temporal-Difference formulations: (1) Truncated Q-functions which represent the return for the first $n$ steps of a target-policy rollout w.r.t. the full action-value and (2) Shifted Q-functions, acting as the farsighted return after this truncated rollout. This decomposition allows us to optimize both parts with their individual learning rates, achieving significant learning speedup. We prove that the combination of these short- and long-term predictions is a representation of the full return, leading to the Composite \mbox{Q-learning} algorithm. We show the efficacy of Composite Q-learning in the tabular case and compare Deep Composite Q-learning with TD3 and TD3($\Delta$), which we introduce as an off-policy variant of TD($\Delta$). Moreover, we show that Composite TD3 outperforms TD3 as well as state-of-the-art compositional Q-learning approaches significantly in terms of data-efficiency in multiple simulated robot tasks and that Composite Q-learning is robust to stochastic environments and reward functions.
\end{abstract}

\begin{keywords}
  Compositional Value-learning, Noise Robustness, Data-efficient Reinforcement Learning, Off-policy Learning, Model-free Reinforcement Learning
\end{keywords}

\section{Introduction}
In recent years, Q-learning \citep{watkins1992q} has achieved major successes in a broad range of areas by employing deep neural networks \citep{mnih2015human,silver2018general,ddpg}, including environments of higher complexity \citep{learningbyplaying} and even in first real world applications \citep{learningtowalk}. Due to its off-policy update, Q-learning can leverage transitions collected by any policy which makes it more data-efficient compared to on-policy methods. Deep Q-learning, however, still has a very high demand for data samples and suffers from stability issues which is limiting with regard to robot applications.\\

One reason for these issues is the long temporal horizon the reward signal has to propagate through, possibly depending on many decision steps and transitions. In addition, the immediate reward signal can be very noisy for real-world tasks. To avoid these problems, the full return can be composed of value functions with increasing discount, an approach called TD($\Delta$) \citep{timescales}. Increasing discount factors translate to growing task horizons, leading to a decomposition of the long-term prediction task into multiple subtasks of smaller complexity. The first of these is easiest to predict since it covers the shortest horizon. All other predictions then build upon the first to estimate the remainder of the long-term value. The advantage then results from the fact that values for the easier predictions can potentially stabilize more quickly, leading to faster and more stable learning in the bootstrapping scheme. \citeauthor{timescales} formalize a Bellman-operator over the differences between value functions of increasing discount values in an on-policy setting. 

In this work, we propose an approach that follows a similar reasoning, but breaks down the long-term return into a composition of several short-term predictions over a \emph{fixed temporal horizon} rather than on a discount-level. We approach this via a new formulation of consecutive bootstrapping from value functions corresponding to different horizons of the target-policy associated to the \emph{full return}.\\

This way of composing the long-term return bears a strong resemblance to a combination of Monte Carlo rollouts and a bootstrap of the value function, as common in on-policy methods such as PPO \citep{schulman2017proximal}. To employ $n$-step Monte Carlo rollouts in \textit{off-policy} settings, however, these subtrajectories of the exploratory policy will differ from the true value of the target-policy. In order to take advantage of the temporal relaxation, the replay buffer has to be restricted in size or the rollout length has to be set to a small value to keep the samples close to the target-policy~\citep{d4pg,rainbow}. To avoid these problems, a dynamics model can be used for imaginary rollouts, the so-called \textit{Model-based Value Expansion} (MVE) \citep{mbve, buckman2018sample}. Our approach does not rely on a model of the environment and is off-policy by design. In the \textit{Hybrid Reward Architecture} (HRA), \citet{DBLP:conf/nips/SeijenFLRBT17} suggest a decomposition of the reward and the estimation of value functions for each part of this decomposition which are then combined as an approximation of the full return. HRA addresses the problem of complex rewards and is thus complementary to our work focusing on long time scales. A first formulation of the decomposition of the long-term return into short-term predictions of target-policy rollouts has been done in \citep{kalweit2019composite}. Concurrently, \citet{asis2019} introduced \textit{Fixed Horizon TD-learning}, formalizing action-value functions for different horizons in time. In contrast to our work, however, \citeauthor{asis2019} define a consecutive bootstrapping formulation w.r.t. the target policies of the different truncated horizons and not the full return which can lead to suboptimal behaviour on the long-term. A first general description of using intermediate predictions in Temporal Differences to restrict predictions to a fixed temporal horizon has been introduced already in Sutton's seminal work in \citep{sutton1988learning}. An interesting perspective of our approach to estimate long-term values at different time-scales is to see them as natural auxiliary tasks \citep{DBLP:conf/iclr/JaderbergMCSLSK17}. Also related are \textit{General Value Function Networks} which constrain the hidden state of a recurrent neural network to be multi-step predictions to enhance temporal consistency \citep{schlegel2018general}. Extensions of our work presented here used the formulation of truncated value-functions for the prediction of auxiliary costs within a constrained model-free RL framework \citep{kalweit2020interpretable} and the formulation of Shifted Q-functions as a means to make a novel inverse reinforcement learning method model-free \citep{kalweit2020inverse}.\\

Our contributions are threefold. First, we introduce the \textit{Composite Q-learning} algorithm. For its formulation, we define \textit{Truncated Q-functions}, representing the return for the first $n$ steps of a target-policy rollout w.r.t. the full action-value. In addition, we introduce \textit{Shifted Q-functions} which represent the farsighted return after this truncated rollout. Both are then combined in a mutual recursive definition of the Q-function for the final algorithm. Second, we extend Composite Q-learning to the Deep Learning case and introduce a new entropy-based regularization. Third, we introduce \textit{TD3$(\Delta)$}, an off-policy extension of TD($\Delta$) to deep Q-learning.\\

We describe the theoretical background in \Cref{sec:background} and define Composite Q-learning and TD3($\Delta$) in \Cref{sec:lstq}. We analyze Composite Q-learning in the tabular case and in the function approximation setting for three continuous robot tasks in \Cref{sec:experiments}. We then conclude in \Cref{sec:conclusion}.

\section{Background}
\label{sec:background}
We consider tasks modelled as Markov decision processes (MDP), where an agent executes action $a_t\in\mathcal{A}$ in some state $s_t\in\mathcal{S}$ following its stochastic policy $\pi:\mathcal{S}\times\mathcal{A}\mapsto[0, 1]$. According to the dynamics model $\mathcal{M}:\mathcal{S}\times\mathcal{A}\times\mathcal{S}\mapsto[0, 1]$ of the environment, the agent transitions into some state $s_{t+1}\in\mathcal{S}$ and receives scalar reward $r_{t}$. The agent aims at maximizing the expected long-term return: \begin{equation}\mathcal{R}^\pi(s_t)=\mathbf{E}_{a_{j\geq t}\sim\pi,s_{j>t}\sim\mathcal{M}}\left[\sum_{j=t}^{T-1}\gamma^{j-t}r_j\middle|s_t\right],\end{equation} where $T$ is the (possibly infinite) temporal horizon of the MDP and $\gamma\in[0,1]$ the discount factor. It therefore tries to find $\pi^*$, s.t. $\mathcal{R}^{\pi^*}\geq\mathcal{R}^\pi$ for all $\pi$. If the model of the environment is unknown, model-free methods based on the Bellman Optimality Equation over the so-called action-value: \begin{equation}Q^\pi(s_t, a_t)=\mathbf{E}_{a_{j>t}\sim\pi,s_{j>t}\sim\mathcal{M}}\left[\sum_{j=t}^{T-1}\gamma^{j-t}r_j\middle|s_t, a_t\right],\end{equation}
can be used:
\begin{equation}Q^*(s_t, a_t)=r_t+\gamma\max_a\mathbf{E}_{s_{j>t}\sim\mathcal{M}}\left[Q^*(s_{t+1}, a)\right]. \end{equation}
In the following, we abbreviate $\mathbf{E}_{a_{j>t}\sim\pi,s_{j>t}\sim\mathcal{M}}[\cdot|s_t,a_t]$ by $\mathbf{E}_{\pi,\mathcal{M}}[\cdot]$. One of the most popular model-free reinforcement learning methods is the sampling-based off-policy \textit{Q-learning} algorithm \citep{watkins1992q} for discrete action spaces:
\begin{equation}
    Q(s_t,a_t)\leftarrow (1-\alpha) Q(s_t,a_t) + \alpha(r_t+\gamma \max_aQ(s_{t+1}, a)),
\end{equation}
with learning rate $\alpha$. A representative of continuous model-free reinforcement learning with function approximation is the \textit{Deep Deterministic Policy Gradient} (DDPG) actor-critic method \citep{ddpg}. In DDPG, actor $\mu$ is a deterministic mapping from states to actions, $\mu:\mathcal{S}\mapsto\mathcal{A}$, representing the actions that maximize the critic $Q^\mu$, i.e. $\mu(s_t)=\argmax_a Q^\mu(s_t, a)$. $Q$ and $\mu$ are estimated by function approximators $Q(\cdot, \cdot|\theta^Q)$ and $\mu(\cdot|\theta^\mu)$, parameterized by $\theta^Q$ and $\theta^\mu$. The critic is optimized on the mean squared error between predictions $Q(s_j, a_j|\theta^Q)$ and targets $y_j=r_j+\gamma Q'(s_{j+1}, \mu^\prime(s_{j+1}|\theta^{\mu\prime})|\theta^{Q\prime})$, where $Q^\prime$ and $\mu^\prime$ are target networks, parameterized by $\theta^{Q\prime}$ and $\theta^{\mu\prime}$. The parameters of $\mu$ are optimized following the deterministic policy gradient theorem~\citep{silver2014deterministic}: \begin{equation}\nabla_{\theta^\mu}\mapsfrom\frac{1}{m}\sum_j\nabla_a Q(s,a|\theta^Q)|_{s=s_j, a=\mu(s_j|\theta^\mu)}\nabla_{\theta^\mu}\mu(s|\theta^\mu),\end{equation} and the parameters of the target networks are updated via Polyak averaging: \begin{equation}\theta^{Q\prime}\mapsfrom(1-\tau)\theta^{Q\prime}+\tau\theta^{Q} \text{ and } \theta^{\mu\prime}\mapsfrom(1-\tau)\theta^{\mu\prime}+\tau\theta^{\mu},\end{equation} with $\tau\in[0,1]$.\\

The state-of-the-art actor-critic method TD3 \citep{fujimoto2018addressing} then adds three adjustments to vanilla DDPG. First, the minimum prediction of two distinct critics is taken for target calculation to alleviate overestimation bias, an approach belonging to the family of Double Q-learning algorithms \citep{doubleq}. Second, Gaussian smoothing is applied to the target-policy, addressing the variance in updates. Third, actor and target networks are updated every $d$-th gradient step of the critic, to account for the problem of moving targets.

\section{Combining Long-term and Short-term Predictions in Q-learning}
\label{sec:lstq}
In the following, we introduce the Composite Q-learning algorithm, along with TD3($\Delta$) as an additional baseline. Both approaches aim at decomposing the long-term value into values of smaller time scales. Composite Q-learning approaches this by dividing the bootstrapping into a short-term and a long-term component. TD3($\Delta$) on the other hand formalizes a delta function to estimate the remainder of an action-value corresponding to a smaller discount factor.

\subsection{Composite Q-learning}
\label{subsec:composite}
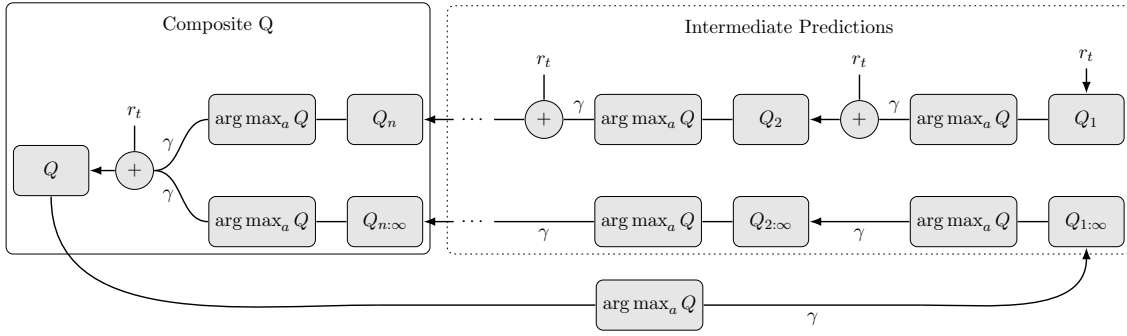
\begin{figure}
  \centering
  \resizebox{\textwidth}{!}{

\begin{tikzpicture}
\tikzset{>=Latex, line width=1pt}

\tikzstyle{box_solid}=[rectangle, draw, rounded corners]
\tikzstyle{box_dashed}=[rectangle, draw, rounded corners, dashed]
\tikzstyle{box_dotted}=[rectangle, draw, rounded corners, dotted]
\tikzstyle{element}=[draw, fill=black!10, text centered] 
\tikzstyle{element_rect}=[element, rounded corners, rectangle, minimum height = 1cm, minimum width = 1.5cm]
\tikzstyle{element_ell}=[element, ellipse]
\tikzstyle{element_circ}=[element, circle]
\tikzstyle{element_dia}=[element, diamond]
\tikzstyle{element_rect_split}=[element, rectangle split, rectangle split horizontal, rectangle split parts=2]

\node (tr1) [element_rect] {$Q_1$};
\node (tr11) [element_rect, left=0.6cm of tr1] {$\argmax_{a}Q$};
\node (tr2) [element_circ, left=0.6cm of tr11] {$+$};
\node (tr3) [element_rect, left=0.6cm of tr2] {$Q_2$};
\node (tr33) [element_rect, left=0.6cm of tr3] {$\argmax_{a}Q$};
\node (tr333) [element_circ, left=0.6cm of tr33] {$+$};
\node (tr4) [left=0.6cm of tr333, minimum height=1cm] {$\cdots$};
\node (tr5) [element_rect, left=0.6cm of tr4] {$Q_{n}$};
\node (tr55) [element_rect, left=0.6cm of tr5] {$\argmax_{a}Q$};

\node (r1) [above=0.5cm of tr1] {$r_t$};
\node (r2) [above=0.5cm of tr2] {$r_t$};
\node (r3) [above=0.5cm of tr333] {$r_t$};

\node (lt1) [element_rect, below=1cm of tr1] {$Q_{1:\infty}$};
\node (lt11) [element_rect, below=1cm of tr11] {$\argmax_{a}Q$};
\node (lt3) [element_rect, below=1cm of tr3] {$Q_{2:\infty}$};
\node (lt33) [element_rect, below=1cm of tr33] {$\argmax_{a}Q$};
\node (lt5) [element_rect, below=1cm of tr5] {$Q_{n:\infty}$};
\node (lt4) [right=0.6cm of lt5] {$\cdots$};
\node (lt55) [element_rect, below=1cm of tr55] {$\argmax_{a}Q$};

\node (mid) at ($(tr55)!0.5!(lt55)$) {};
\node (qplus) [element_circ, left=2cm of mid] {$+$};
\node (q) [element_rect, left=0.5cm of qplus] {$Q$};
\node (rq) [above=0.5cm of qplus] {$r_t$};

\draw [] (tr1.west) edge[-, out=180, in=0, thick] node [above] {} (tr11.east);
\draw [] (tr11.west) edge[-, out=180, in=0, thick] node [above] {$\gamma$} (tr2.east);
\draw [] (tr2.west) edge[->, out=180, in=0, thick] node [above] {} (tr3.east);
\draw [] (tr3.west) edge[-, out=180, in=0, thick] node [above] {} (tr33.east);
\draw [] (tr33.west) edge[-, out=180, in=0, thick] node [above] {$\gamma$} (tr333.east);
\draw [] (tr333.west) edge[-, out=180, in=0, thick] node [above] {} (tr4.east);
\draw [] (tr4.west) edge[->, out=180, in=0, thick] node [above] {} (tr5.east);

\draw [] (r1.south) edge[->, thick] (tr1.north);
\draw [] (r2.south) edge[-, thick] (tr2.north);
\draw [] (r3.south) edge[-, thick] (tr333.north);
\draw [] (rq.south) edge[-, thick] (qplus.north);

\draw [] (lt1.west) edge[-, out=180, in=0, thick] node [below] {} (lt11.east);
\draw [] (lt11.west) edge[->, out=180, in=0, thick] node [below] {$\gamma$} (lt3.east);
\draw [] (lt3.west) edge[-, out=180, in=0, thick] node [below] {} (lt33.east);
\draw [] (lt33.west) edge[-, out=180, in=0, thick] node [below] {$\gamma$} (lt4.east);
\draw [] (lt4.west) edge[->, out=180, in=0, thick] node [below] {} (lt5.east);

\draw [] (tr5.west) edge[-, out=180, in=0, thick] node [left] {} (tr55.east);
\draw [] (tr55.west) edge[-, out=180, in=0, thick] node [left] {$\gamma$} (qplus.east);
\draw [] (lt5.west) edge[-, out=180, in=0, thick] node [left] {} (lt55.east);
\draw [] (lt55.west) edge[-, out=180, in=0, thick] node [left] {$\gamma$} (qplus.east);
\draw [] (qplus.west) edge[->, out=180, in=0, thick] (q.east);

\node (mid0) [minimum height=1cm] at ($(lt1)!0.5!(lt3)$) {};
\node (p1) [below=1cm of lt5] {};
\node (p2) [below=1cm of mid0] {};
\node (mid00) [minimum height=1cm, element_rect] at ($(p1.west)!0.5!(p2.west)$) {$\argmax_{a}Q$};
\draw [] (q.south) edge[-, out=270, in=180, thick] (p1.west);
\draw [] (p1.west) edge[-, out=0, in=180, thick] node [below] {} (mid00.west);
\draw [] (mid00.east) edge[-, out=0, in=180, thick] node [below] {$\gamma$} (p2.west);
\draw [] (p2.west) edge[->, out=0, in=270, thick] (lt1.south);


\node (mid1) at ($(tr5)!0.5!(q)$) {};
\node (mid2) [] at (mid1.north |- r2) {};

\node (mid6) [minimum height=1cm] at (tr1.east) {};
\node (mid7) [minimum height=1cm] at (tr4.west) {};
\node (mid3) [above=0.5cm of mid6] {};
\node (mid4) [above=0.5cm of mid7] {};
\node (mid5) at ($(mid3)!0.5!(mid4)$) {};

\node (Aux-Label) [above=0.27cm of mid5] {Intermediate Predictions};
\node (Aux) [box_dotted, fit=(r2) (tr4) (lt1) (Aux-Label)] {};
\node (Qcom-Label) [above=0.27cm of mid2] {Composite Q};
\node (Qcom) [box_solid, fit=(tr5) (lt5) (q) (rq) (Qcom-Label)] {};
\end{tikzpicture}
  }
  
  \caption{Structure of Composite Q-learning. $Q_i$ denote the Truncated and $Q_{i:\infty}$ the Shifted Q-functions at step $i$. $Q$ is the complete Composite Q-function. Directed incoming edges yield the targets for the corresponding value-function. Edges denoted by $\gamma$ are discounted.}
  \label{fig:composite}
\end{figure}

We estimate the return of $n$-step rollouts of the target-policy via Truncated Q-functions which we then combine to the full return with model-free Shifted Q-functions, an approach we call \textit{Composite Q-learning}, while remaining \textit{purely off-policy}. Since these quantities cannot be estimated directly from single-step transitions, we introduce a consecutive bootstrapping scheme based on intermediate predictions. For an overview, see \Cref{fig:composite}. Code based on the implementation of TD3\footnote{\url{https://github.com/sfujim/TD3}} can be found in the supplementary.

\subsubsection{Truncated Q-functions}
\label{subsubsec:truncq}
In order to decompose the action-value into multiple truncated returns, assume that $n \ll (T-1)$ and that $(T-1-t)\mod n=0$ for task horizon $T$. We make use of the following observation:
\begin{equation}
\label{eq:qsum}
\begin{split}
Q^\pi(s_t, a_t)&=\mathbf{E}_{\pi,\mathcal{M}}\left[r_t+ \gamma r_{t+1} + \gamma^2 r_{t+2} + \gamma^3 r_{t+3} + \dots + \gamma^{T-1} r_{T-1}\right]\\
&=\mathbf{E}_{\pi,\mathcal{M}}\Biggl[\left({ \sum\limits_{j=t}^{t+n-1} \gamma^{j-t} r_j} \right) + \gamma^n \left({\sum\limits_{j=t+n}^{t+2n-1} \gamma^{j-(t+n)} r_j} \right)\\
&\qquad\qquad\qquad\qquad\qquad+ \dots +\gamma^{T-n} \left({ \sum\limits_{j=T-n}^{T-1} \gamma^{j-(T-n)} r_j} \right)\Biggr].
\end{split}
\end{equation}
That is, we can define the action-value as a combination of partial sums of length $n$. We can then define the Truncated Q-function as $Q^\pi_n(s_t, a_t)=\mathbf{E}_{\pi,\mathcal{M}}[\sum_{j=t}^{t+n-1}\gamma^{j-t}r_j]$,
which we plug into \Cref{eq:qsum}:
\begin{equation}\label{eq:trunc}
\begin{split}
Q^\pi(s_t, a_t)&=\mathbf{E}_{\pi,\mathcal{M}}[Q^\pi_n(s_t, a_t)+\gamma^n Q^\pi_n(s_{t+n}, a_{t+n})+\dots+\gamma^{T-n} Q^\pi_n(s_{T-n}, a_{T-n})].\\
\end{split}
\end{equation}

\begin{theorem}
\label{thm:thm1}
Let $Q^\pi_1(s_t, a_t)=r_t$ be the one-step Truncated Q-function and $Q^\pi_{i>1}(s_t, a_t)=r_t + \gamma\mathbf{E}_{t,\pi,\mathcal{M}}[Q^\pi_{i-1}(s_{t+1}, a_{t+1})]$ the i-step Truncated Q-function. Then $Q^\pi_{i}(s_t, a_t)$ represents the truncated return $Q^\pi_i(s_t, a_t)=\mathbf{E}_{t,\pi,\mathcal{M}}[\sum_{j=t}^{t+i-1}\gamma^{j-t}r_j]$.
\end{theorem}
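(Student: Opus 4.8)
The plan is to prove the claim by induction on $i$, since the Truncated Q-function is defined recursively and the target closed-form expression has a natural telescoping structure.

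\textbf{Base case.} For $i=1$, the recursive definition gives $Q^\pi_1(s_t,a_t)=r_t$, while the closed form is $\mathbf{E}_{t,\pi,\mathcal{M}}[\sum_{j=t}^{t}\gamma^{j-t}r_j]=\mathbf{E}_{t,\pi,\mathcal{M}}[r_t]=r_t$ (the expectation is trivial since $r_t$ is determined by $s_t,a_t$, or at worst we keep the expectation and note the sum has a single term). So the two agree.

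\textbf{Inductive step.} Assume the statement holds for $i-1$, i.e. $Q^\pi_{i-1}(s_{t'},a_{t'})=\mathbf{E}_{t',\pi,\mathcal{M}}[\sum_{j=t'}^{t'+i-2}\gamma^{j-t'}r_j]$ for every state-action pair. Apply this with $t'=t+1$ and substitute into the recursive definition:
\begin{equation}
\begin{split}
Q^\pi_i(s_t,a_t)
&=r_t+\gamma\,\mathbf{E}_{t,\pi,\mathcal{M}}\!\left[Q^\pi_{i-1}(s_{t+1},a_{t+1})\right]\\
&=r_t+\gamma\,\mathbf{E}_{t,\pi,\mathcal{M}}\!\left[\mathbf{E}_{t+1,\pi,\mathcal{M}}\Bigl[\textstyle\sum_{j=t+1}^{t+i-1}\gamma^{j-(t+1)}r_j\Bigr]\right].
\end{split}
\end{equation}
The key manipulation is to pull the $\gamma$ inside the sum, shifting the exponent $\gamma\cdot\gamma^{j-(t+1)}=\gamma^{j-t}$, and to collapse the nested expectations into a single expectation $\mathbf{E}_{t,\pi,\mathcal{M}}$ via the tower property (the inner expectation is over $a_{j>t+1}\sim\pi$, $s_{j>t+1}\sim\mathcal{M}$, conditioned on $s_{t+1},a_{t+1}$, which combined with the sampling of $s_{t+1}\sim\mathcal{M}$ and $a_{t+1}\sim\pi$ given $s_t,a_t$ reconstructs exactly the distribution defining $\mathbf{E}_{t,\pi,\mathcal{M}}$). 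This yields $r_t+\mathbf{E}_{t,\pi,\mathcal{M}}[\sum_{j=t+1}^{t+i-1}\gamma^{j-t}r_j]$; absorbing the deterministic $r_t=\gamma^{t-t}r_t$ as the $j=t$ term of the sum gives $\mathbf{E}_{t,\pi,\mathcal{M}}[\sum_{j=t}^{t+i-1}\gamma^{j-t}r_j]$, which is the desired closed form for step $i$.

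The main obstacle is purely notational rather than mathematical: making the tower-property / law-of-total-expectation step rigorous given the compressed notation $\mathbf{E}_{t,\pi,\mathcal{M}}[\cdot]$, i.e.\ carefully arguing that conditioning the inner expectation on $(s_{t+1},a_{t+1})$ and then taking the outer expectation over the one-step transition and policy draw is the same as the full rollout expectation from $t$. Everything else — the exponent bookkeeping when multiplying by $\gamma$ and the re-indexing of the summation to include the $j=t$ term — is routine. One should also note the mild implicit assumption $i\le T-t$ so the truncated sum stays within the MDP horizon, consistent with the standing assumption $n\ll T-1$.
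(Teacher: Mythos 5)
Your proof is correct and follows essentially the same route as the paper's: induction on $i$ with the same base case, the same substitution of the inductive hypothesis at $t+1$, the same $\gamma$-exponent shift, and the same absorption of $r_t$ as the $j=t$ term. The only difference is that you make the tower-property collapse of nested expectations explicit, which the paper leaves implicit.
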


\begin{proof}
  Proof by induction. $Q^\pi_1(s_t, a_t)=r_t$ by definition. The theorem follows from induction step:
  \begin{align*}
    Q^\pi_{i}(s_t, a_t)&=r_t + \gamma\mathbf{E}_{\pi,\mathcal{M}}\left[Q^\pi_{i-1}(s_{t+1}, a_{t+1})\right]\\
    &=r_t + \gamma\mathbf{E}_{\pi,\mathcal{M}}\left[\sum_{j=(t+1)}^{(t+1)+(i-1)-1}\gamma^{j-(t+1)}r_j\right]\\
    &=r_t + \gamma\mathbf{E}_{\pi,\mathcal{M}}\left[\sum_{j=(t+1)}^{t+i-1}\gamma^{j-(t+1)}r_j\right]\\
    &=r_t + \mathbf{E}_{\pi,\mathcal{M}}\left[\sum_{j=(t+1)}^{t+i-1}\gamma^{j-t}r_j\right]\\
    &=\mathbf{E}_{\pi,\mathcal{M}}\left[\sum_{j=t}^{t+i-1}\gamma^{j-t}r_j\right].
  \end{align*}
\end{proof}
Please note, that the assumption of $(T-1-t)\mod n=0$ only leads to an easier notation and is not a general restriction. If $(T-1-t)\mod n\neq0$, then the last partial sum simply has a shorter horizon. Following \Cref{thm:thm1}, we approximate $Q^*_n(s_t, a_t)$ off-policy via consecutive bootstrapping. In order to limit the prediction to horizon $n$, we estimate $n$ different truncated value functions belonging to increasing horizons, with the first one estimating the immediate reward function. All consecutive value functions then bootstrap from the prediction of the preceding value function evaluating the target-policy of \emph{full} return $Q$ for a fixed horizon. Analogous to vanilla Q-learning, the update procedure is given by:
\begin{equation}
  \label{eq:truncatedqlearning}
  \begin{split}
  Q_1(s_t, a_t) &\leftarrow (1-\alpha_\text{Tr})Q_1(s_t, a_t) + \alpha_\text{Tr}r_t \text{ and }\\
  Q_{i>1}(s_t, a_t) &\leftarrow (1-\alpha_\text{Tr})Q_i(s_t, a_t) + \alpha_\text{Tr}(r_t + \gamma Q_{i-1}(s_{t+1}, \argmax_a Q(s_{t+1}, a))),
  \end{split}
\end{equation}
with learning rate $\alpha_\text{Tr}$. Please note that in order to estimate \Cref{eq:trunc}, the dynamics model would be needed to get $s_{t+c\cdot n}$ of a rollout starting in $s_t$. In the following, we describe an approach to achieve an estimation of \Cref{eq:trunc} model-free.

\subsubsection{Shifted Q-functions}
\label{subsubsec:shiftedq}
To get an estimation for the remainder of the rollout $Q_{n:\infty}^\pi=\mathbf{E}_{\pi,\mathcal{M}}[\gamma^n Q(s_{t+n}, a_{t+n})]$ after $n$ steps, we use a consecutive bootstrapping formulation of the Q-prediction as a means to skip the first $n$ rewards of a target-policy rollout.

\begin{theorem}
  \label{thm:thm2}
Let $Q^\pi_{1:\infty}(s_t, a_t)=\mathbf{E}_{t,\pi,\mathcal{M}}[\gamma Q^\pi(s_{t+1}, a_{t+1})]$ be the one-step Shifted Q-function and $Q^\pi_{i>1:\infty}(s_t, a_t)=\mathbf{E}_{t,\pi,\mathcal{M}}[\gamma Q^\pi_{i-1:\infty}(s_{t+1}, a_{t+1})]$ the i-step Shifted Q-function. Then $Q^\pi_{i:\infty}(s_t, a_t)$ represents the shifted return $Q^\pi_{i:\infty}(s_t, a_t)=\mathbf{E}_{t,\pi,\mathcal{M}}[\gamma^i Q^\pi(s_{t+i}, a_{t+i})]$.
\end{theorem}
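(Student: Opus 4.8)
The plan is to mirror the induction used for \Cref{thm:thm1}, this time over the shift index $i$. The base case $i=1$ is immediate: by definition $Q^\pi_{1:\infty}(s_t,a_t)=\mathbf{E}_{t,\pi,\mathcal{M}}[\gamma Q^\pi(s_{t+1},a_{t+1})]$, which is exactly the claimed form $\mathbf{E}_{t,\pi,\mathcal{M}}[\gamma^i Q^\pi(s_{t+i},a_{t+i})]$ specialized to $i=1$.

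For the induction step I would assume the claim for $i-1$, i.e. $Q^\pi_{i-1:\infty}(s_t,a_t)=\mathbf{E}_{t,\pi,\mathcal{M}}[\gamma^{i-1}Q^\pi(s_{t+i-1},a_{t+i-1})]$, and in particular invoke its time-shifted instance $Q^\pi_{i-1:\infty}(s_{t+1},a_{t+1})=\mathbf{E}_{t+1,\pi,\mathcal{M}}[\gamma^{i-1}Q^\pi(s_{t+i},a_{t+i})]$. Plugging this into the recursive definition of $Q^\pi_{i:\infty}$ gives
\[
Q^\pi_{i:\infty}(s_t,a_t)=\mathbf{E}_{t,\pi,\mathcal{M}}\bigl[\gamma\,\mathbf{E}_{t+1,\pi,\mathcal{M}}[\gamma^{i-1}Q^\pi(s_{t+i},a_{t+i})]\bigr].
\]
Pulling the deterministic factor $\gamma^{i}$ out of both expectations and collapsing the nested expectation then yields $\mathbf{E}_{t,\pi,\mathcal{M}}[\gamma^{i}Q^\pi(s_{t+i},a_{t+i})]$, which is the claim.

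The only real subtlety — and the step I would be most careful to state cleanly — is the bookkeeping of the conditional expectations: one must justify $\mathbf{E}_{t,\pi,\mathcal{M}}[\mathbf{E}_{t+1,\pi,\mathcal{M}}[\cdot]]=\mathbf{E}_{t,\pi,\mathcal{M}}[\cdot]$. This holds because the outer expectation integrates over precisely the variables $(s_{t+1},a_{t+1})$ on which the inner expectation conditions, together with the later rollout variables, so the identity is just the tower property (law of total expectation) combined with the Markov structure of $\pi$ and $\mathcal{M}$; the constant $\gamma$ commutes freely with both expectations. No convergence issue arises from the ``$\infty$'' in the notation, since $Q^\pi$ is assumed well-defined and we only multiply it by powers of $\gamma\le 1$. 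If desired, the same identity can alternatively be obtained non-recursively by unrolling the one-step shift $i$ times, but the induction above is the cleaner route and parallels the proof of \Cref{thm:thm1}.
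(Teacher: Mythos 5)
Your proposal is correct and follows essentially the same route as the paper's proof: induction on $i$, substituting the time-shifted inductive hypothesis into the recursive definition and combining $\gamma\cdot\gamma^{i-1}=\gamma^{i}$. The only difference is that you make explicit the tower-property step collapsing the nested expectation, which the paper's derivation performs implicitly.
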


\begin{proof}
  Proof by induction. $Q^\pi_{1:\infty}(s_t, a_t)=\mathbf{E}_{\pi,\mathcal{M}}[\gamma Q^\pi(s_{t+1}, a_{t+1})]$ by definition. The theorem follows from induction step:
  \begin{align*}
    Q^\pi_{i:\infty}(s_t, a_t)&=\mathbf{E}_{\pi,\mathcal{M}}\left[\gamma Q^\pi_{i-1:\infty}(s_{t+1}, a_{t+1})\right]\\
    &=\mathbf{E}_{\pi,\mathcal{M}}\left[\gamma (\gamma^{i-1}Q^\pi(s_{t+1+i-1}, a_{t+1+i-1}))\right]\\
    &=\mathbf{E}_{\pi,\mathcal{M}}\left[\gamma (\gamma^{i-1}Q^\pi(s_{t+i}, a_{t+i}))\right]\\
    &=\mathbf{E}_{\pi,\mathcal{M}}\left[\gamma^{i}Q^\pi(s_{t+i}, a_{t+i})\right].
  \end{align*}
\end{proof}
In accordance to the consecutive bootstrapping formulation of Truncated Q-functions, the updates for the Shifted Q-functions are:
\begin{equation}
\label{eq:shiftedupdate}
  \begin{split}
  Q_{1:\infty}(s_t, a_t) &\leftarrow (1-\alpha_\text{Sh})Q_{1:\infty}(s_t, a_t) + \alpha_\text{Sh}(\gamma \max_a Q(s_{t+1}, a)) \text{ and}\\
  Q_{(i>1):\infty}(s_t, a_t) &\leftarrow (1-\alpha_\text{Sh}) Q_{i:\infty}(s_t, a_t) + \alpha_\text{Sh}(\gamma Q_{(i-1):\infty}(s_{t+1}, \argmax_a Q(s_{t+1}, a))),
  \end{split}
\end{equation}
with learning rate $\alpha_\text{Sh}$. We hypothesize that Shifted Q-functions allow for higher learning rates, since they only have to account for the variance of the underlying dynamics and not of the immediate reward function, as opposed to conventional value functions.
\subsubsection{Composition}
\label{sec:Composition}
\begin{algorithm}[h]
  \small
  initialize Truncated Q-functions $Q_i$\\
  initialize Shifted Q-functions $Q_{i:\infty}$\\
  initialize Q-function $Q$\\
  \For{$episode=1..E$} {
    get initial state $s_1$\\
    \For{$t=1..T$}{
      apply $\epsilon$-greedy action $a_t$\\
      observe $s_{t+1}$ and $r_t$\\
      update Truncated Q-functions by:\\
        \nonl\hspace{0.19cm}$Q_1(s_t, a_t) \leftarrow (1-\alpha_\text{Tr})Q_1(s_t, a_t) + \alpha_\text{Tr}r_t$\\
        \nonl\hspace{0.19cm}$Q_{i>1}(s_t, a_t) \leftarrow (1-\alpha_\text{Tr})Q_i(s_t, a_t) + \alpha_\text{Tr}(r_t + \gamma Q_{i-1}(s_{t+1}, \argmax_a Q(s_{t+1}, a)))$\\
      update Shifted Q-functions by:\\
  \nonl\hspace{0.19cm}$Q_{1:\infty}(s_t, a_t) \leftarrow (1-\alpha_\text{Sh})Q_{1:\infty}(s_t, a_t) + \alpha_\text{Sh}(\gamma \max_a Q(s_{t+1}, a))$\\
  \nonl\hspace{0.19cm}$Q_{(i>1):\infty}(s_t, a_t) \leftarrow (1-\alpha_\text{Sh}) Q_{i:\infty}(s_t, a_t) + \alpha_\text{Sh}(\gamma Q_{(i-1):\infty}(s_{t+1}, \argmax_a Q(s_{t+1}, a)))$\\
      update Q-function by:\\
      \nonl\hspace{0.19cm}$Q(s_t, a_t) \leftarrow (1-\alpha_Q)Q(s_t, a_t) + \alpha_Q(r_t + \gamma (Q_n(s_{t+1}, \argmax_a Q(s_{t+1}, a))$\\
      \nonl\hspace{6.75cm}$+ Q_{n:\infty}(s_{t+1}, \argmax_a Q(s_{t+1}, a))))$
    }
  }
\caption{Composite Q-learning}
\label{alg:compositeqlearning}
\end{algorithm}

Following the definitions of Truncated and Shifted Q-functions, we can compose the full return.
\begin{theorem}
  \label{thm:thm3}
Let $Q_n^\pi(s_t,a_t)=\mathbf{E}_{t,\pi,\mathcal{M}}[\sum_{j=t}^{t+n-1}\gamma^{j-t}r_j]$ be the truncated return and $Q_{n:\infty}^\pi(s_t,a_t)=\mathbf{E}_{t,\pi,\mathcal{M}}[\gamma^n Q(s_{t+n}, a_{t+n})]$ the shifted return. Then $Q^\pi(s_t, a_t)=Q_n^\pi(s_t,a_t)+Q_{n:\infty}^\pi(s_t,a_t)$ represents the full return, i.e. $Q^\pi(s_t, a_t)=\mathbf{E}_{t,\pi,\mathcal{M}}[\sum_{j=t}^\infty\gamma^{j-t}r_j]$.
\end{theorem}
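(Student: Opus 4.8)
The plan is a direct calculation that stitches together the two preceding theorems via the tower property of conditional expectation. By \Cref{thm:thm1} (applied with horizon $n$) we already have $Q^\pi_n(s_t,a_t)=\mathbf{E}_{t,\pi,\mathcal{M}}[\sum_{j=t}^{t+n-1}\gamma^{j-t}r_j]$, and by \Cref{thm:thm2} we have $Q^\pi_{n:\infty}(s_t,a_t)=\mathbf{E}_{t,\pi,\mathcal{M}}[\gamma^n Q^\pi(s_{t+n},a_{t+n})]$. So it remains to show that the sum of these two expectations equals $\mathbf{E}_{t,\pi,\mathcal{M}}[\sum_{j=t}^\infty \gamma^{j-t}r_j]$, which is the claimed representation of $Q^\pi$.

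First I would expand the inner action-value in the shifted term using its definition, $Q^\pi(s_{t+n},a_{t+n})=\mathbf{E}_{t+n,\pi,\mathcal{M}}[\sum_{j=t+n}^\infty \gamma^{j-(t+n)}r_j]$. Then, using the Markov property — the law of the trajectory $(s_{t+n},a_{t+n},s_{t+n+1},\dots)$ under $\pi$ and $\mathcal{M}$ depends only on $(s_{t+n},a_{t+n})$ — the nested expectation $\mathbf{E}_{t,\pi,\mathcal{M}}[\,\mathbf{E}_{t+n,\pi,\mathcal{M}}[\cdot]\,]$ collapses by the tower rule to a single expectation $\mathbf{E}_{t,\pi,\mathcal{M}}[\cdot]$ over trajectories starting at $(s_t,a_t)$. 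Pulling the factor $\gamma^n$ inside and reindexing via $\gamma^n\gamma^{j-(t+n)}=\gamma^{j-t}$ yields $Q^\pi_{n:\infty}(s_t,a_t)=\mathbf{E}_{t,\pi,\mathcal{M}}[\sum_{j=t+n}^\infty \gamma^{j-t}r_j]$.

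Finally I would add the two pieces: by linearity of expectation over the common trajectory measure, $Q^\pi_n(s_t,a_t)+Q^\pi_{n:\infty}(s_t,a_t)=\mathbf{E}_{t,\pi,\mathcal{M}}[\sum_{j=t}^{t+n-1}\gamma^{j-t}r_j+\sum_{j=t+n}^\infty\gamma^{j-t}r_j]=\mathbf{E}_{t,\pi,\mathcal{M}}[\sum_{j=t}^\infty\gamma^{j-t}r_j]=Q^\pi(s_t,a_t)$, which is exactly the asserted identity. This mirrors the decomposition already used in \Cref{eq:trunc}, but with the model-dependent tail replaced by the model-free shifted quantity of \Cref{thm:thm2}.

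The step I expect to be the only real obstacle is the rigorous justification of collapsing the nested expectations: one must state explicitly that the inner expectation in the definition of $Q^\pi_{n:\infty}$ is over futures conditioned on $(s_{t+n},a_{t+n})$, that this is precisely the conditional law picked out by the Markov property inside the outer expectation, and — in the $T=\infty$ case — that absolute convergence (bounded rewards together with $\gamma<1$, or a finite $T$) permits splitting and reordering the infinite series and interchanging sum and expectation. Everything else is routine bookkeeping with geometric factors and index shifts, of the same flavour as in the proofs of \Cref{thm:thm1} and \Cref{thm:thm2}.
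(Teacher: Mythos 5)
Your proposal is correct, but it takes a genuinely different route from the paper. The paper's proof never expands the inner $Q^\pi$ as an infinite sum of rewards; instead it substitutes the composite identity $Q^\pi = Q^\pi_n + Q^\pi_{n:\infty}$ \emph{into itself} at state $s_{t+n}$, obtaining $\mathbf{E}_{\pi,\mathcal{M}}[\sum_{j=t}^{t+2n-1}\gamma^{j-t}r_j+\gamma^{2n} Q^\pi(s_{t+2n}, a_{t+2n})]$, and then appeals to induction to push the bootstrap term arbitrarily far into the future, so that the full return emerges in the limit (implicitly using $\gamma<1$ and boundedness so that $\gamma^{kn}Q^\pi(s_{t+kn},a_{t+kn})\to 0$). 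Your argument instead reads the $Q^\pi$ appearing inside $Q^\pi_{n:\infty}$ as the already-known true action-value $\mathbf{E}_{t+n,\pi,\mathcal{M}}[\sum_{j\geq t+n}\gamma^{j-(t+n)}r_j]$, collapses the nested expectations via the Markov/tower property, reindexes, and adds the two series — finishing in one step with no induction and no limiting argument. What each buys: your direct expansion is shorter and makes the convergence/interchange issues explicit (you correctly flag absolute convergence as the only delicate point), but it presupposes that the shifted return is defined in terms of the genuine action-value rather than the composite quantity being constructed; the paper's recursive unrolling handles the self-referential reading of the composition (where $Q^\pi$ on both sides is the same object satisfying the identity) at the cost of leaving the vanishing of the tail term tacit in the phrase ``by induction.'' Both are legitimate given the theorem's hypotheses, and your version is arguably the cleaner one to present if the intent is merely to verify that truncated plus shifted equals full.
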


\begin{proof}
  \begin{align*}
    Q^\pi(s_t,a_t) &= Q_n^\pi(s_t,a_t)+Q_{n:\infty}^\pi(s_t,a_t)\\
    &= \mathbf{E}_{\pi,\mathcal{M}}\left[\sum\limits_{j=t}^{t+n-1}\gamma^{j-t}r_j + \gamma^n Q^\pi(s_{t+n}, a_{t+n})\right]\\
    &= \mathbf{E}_{\pi,\mathcal{M}}\left[\sum\limits_{j=t}^{t+n-1}\gamma^{j-t}r_j + \gamma^n  \Biggl(Q_n^\pi(s_{t+n},a_{t+n})+Q_{n:\infty}^\pi(s_{t+n},a_{t+n})\Biggr)\right]\\
    &= \mathbf{E}_{\pi,\mathcal{M}}\left[\sum\limits_{j=t}^{t+n-1}\gamma^{j-t}r_j + \gamma^n  \Biggl(\sum\limits_{j=t+n}^{t+2n-1}\gamma^{j-t-n}r_j+\gamma^n Q^\pi(s_{t+2n}, a_{t+2n})\Biggr)\right]\\
    &= \mathbf{E}_{\pi,\mathcal{M}}\left[\sum\limits_{j=t}^{t+n-1}\gamma^{j-t}r_j + \sum\limits_{j=t+n}^{t+2n-1}\gamma^{j-t}r_j+\gamma^{2n} Q^\pi(s_{t+2n}, a_{t+2n})\right]\\
    &= \mathbf{E}_{\pi,\mathcal{M}}\left[\sum\limits_{j=t}^{t+2n-1}\gamma^{j-t}r_j+\gamma^{2n} Q^\pi(s_{t+2n}, a_{t+2n})\right].
  \end{align*}
  By induction, it follows $Q^\pi(s_t, a_t)=\mathbf{E}_{\pi,\mathcal{M}}[\sum_{j=t}^\infty\gamma^{j-t}r_j]$.
\end{proof}
Thus, we can formalize the update of the full Q-function by:
\begin{equation}
  \begin{split}
  Q(s_t, a_t) \leftarrow (1-\alpha_Q)Q(s_t, a_t) + \alpha_Q(r_t + \gamma (&Q_n(s_{t+1}, \argmax_a Q(s_{t+1}, a))\\
                                                          &+ Q_{n:\infty}(s_{t+1}, \argmax_a Q(s_{t+1}, a)))).
  \end{split}
\end{equation}
The incorporation of truncated returns breaks down the time scale of the long-term prediction by the Shifted Q-function. We call this algorithm \textit{Composite Q-learning} (cf. \Cref{alg:compositeqlearning}). Please note that Composite Q-learning is equivalent to Q-learning with learning rate $\alpha_Q$ when setting $\alpha_\text{Tr}=\alpha_\text{Sh}=\alpha_Q$. However, a notable advantage is that Composite Q-learning offers an independent optimization for the different learning rates corresponding to different temporal horizons.

\subsection{Deep Composite Q-learning}
In order to cope with infinite or continuous state and action-spaces, we extend Composite Q-learning to the function approximation setting.
\subsubsection{Target Formulation}
Let $Q^{\text{Tr}}(\cdot, \cdot|\theta^{{\text{Tr}}})$ denote a function approximator with parameters $\theta^{{\text{Tr}}}$ and $n$ outputs, subsequently called \textit{heads}, estimating Truncated Q-functions $Q_i$. Each output $Q^{\text{Tr}}_i$ bootstraps from the prediction of the preceding head, with the first approximating the immediate reward function. The targets are therefore given by:
\begin{equation}\label{eq:trunctarget0}
  y^{\text{Tr}}_{j, 1} = r_j \text{ and } y^{\text{Tr}}_{j, i>1} = r_j + \gamma Q_{i-1}^{\text{Tr}\prime}(s_{j+1}, \mu^{\prime}(s_{j+1}|\theta^{\mu\prime})|\theta^{{\text{Tr}\prime}}),
\end{equation}
where $\mu^{\prime}$ corresponds to the target-actor maximizing the full Q-value as defined in \Cref{sec:background} and $Q_{i-1}^{\text{Tr}\prime}$ the output of the respective Q-target-network. That is, $Q^{\text{Tr}}$ represents evaluations of $\mu$ at different stages of truncation and $y^{\text{Tr}}_{j, i<n}$ serve as intermediate predictions to get $y^{\text{Tr}}_{j, n}$. We then \textit{only} use $Q^\text{Tr}_n$, which implements the full $n$-step return, as the first part of the composition of the Q-target, but not the $Q^\text{Tr}_{i<n}$ values.\\

The second part of the composition is represented by the Shifted Q-function. Let $Q^{\text{Sh}}(\cdot, \cdot|\theta^{{\text{Sh}}})$ denote the function approximator estimating the Shifted Q-functions $Q_{i:\infty}^\pi$ by $n$ different outputs, parameterized by $\theta^{{\text{Sh}}}$. We can shift the Q-prediction by bootstrapping without taking the immediate reward into account, so as to skip the first $n$ rewards of a target-policy rollout. The Shifted Q-targets for heads $Q^{\text{Sh}}_i$ therefore become:
\begin{equation}\label{eq:shiftedtarget0}
y_{j, 1}^\text{Sh}=\gamma Q^\prime(s_{j+1}, \mu^\prime(s_{j+1}|\theta^{\mu\prime})|\theta^{Q\prime}) \text{ and } y_{j, i>1}^\text{Sh}=\gamma Q^{\text{Sh}\prime}_{i-1}(s_{j+1}, \mu^\prime(s_{j+1}|\theta^{\mu\prime})|\theta^{{\text{Sh}\prime}}).
\end{equation}  
In the function approximation setting, we can thus define the Composite Q-target as:
\begin{equation}\label{eq:qtarget}
y^Q_j=r_j + \gamma (Q^{\text{Tr}\prime}_n(s_{j+1}, \mu^\prime(s_{j+1}|\theta^{\mu\prime})|\theta^{{\text{Tr}\prime}}) + Q^{\text{Sh}\prime}_n(s_{j+1}, \mu^\prime(s_{j+1}|\theta^{\mu\prime})|\theta^{{\text{Sh}\prime}})),
\end{equation}
approximated by $Q(\cdot, \cdot|\theta^Q)$ with parameters $\theta^Q$. Since we have true reward $r_j$, we include it in the target. Please note, that shifting the action-value in time imposes a bottleneck, since it relies on the estimation of the full action-value. We therefore jointly estimate $Q^{\text{Tr}}$, $Q^{\text{Sh}}$ and $Q$ with function approximator $Q^\mathcal{C}(\cdot, \cdot|\theta^{\mathcal{C}})$.

\begin{algorithm}
  \small
  initialize critic $Q^{\mathcal{C}}$, actor $\mu$ and targets $Q^{{\mathcal{C}}\prime}$, $\mu'$\\
  initialize replay buffer $\mathcal{R}$\\
  \For{$episode=1..E$} {
    get initial state $s_1$\\
    \For{$t=1..T$}{
      apply action $a_t=\mu(s_t|\theta^\mu) + \xi$, where $\xi\sim\mathcal{N}(0, \sigma)$\\
      observe $s_{t+1}$ and $r_t$ and save transition $(s_t,a_t,s_{t+1},r_t)$ in $\mathcal{R}$\\
      calculate targets:\\\label{line:targets}
      {
      \nonl\qquad$y^{\text{Tr}}_{j, 1} = r_j$\\
      \nonl\qquad$y^{\text{Tr}}_{j, i>1} = r_j + \gamma Q_{i-1}^{\text{Tr}\prime}(s_{j+1}, \mu^{\prime}(s_{j+1}|\theta^{\mu\prime})|\theta^{{\text{Tr}\prime}})$\\
      \nonl\qquad$y_{j, 1}^\text{Sh}=\gamma Q^\prime(s_{j+1}, \mu^\prime(s_{j+1}|\theta^{\mu\prime})|\theta^{Q\prime})$\\
      \nonl\qquad$y_{j, i>1}^\text{Sh}=\gamma Q^{\text{Sh}\prime}_{i-1}(s_{j+1}, \mu^\prime(s_{j+1}|\theta^{\mu\prime})|\theta^{{\text{Sh}\prime}})$\\
      \nonl\qquad$y^Q_j=r_j + \gamma (Q^{\text{Tr}\prime}_n(s_{j+1}, \mu^\prime(s_{j+1}|\theta^{\mu\prime})|\theta^{{\text{Tr}\prime}}) + Q^{\text{Sh}\prime}_n(s_{j+1}, \mu^\prime(s_{j+1}|\theta^{\mu\prime})|\theta^{{\text{Sh}\prime}}))$\\

      \nonl\qquad $y^\mathcal{C}_j=\left[y^Q_j,y^{\text{Tr}}_{j, 1},y^{\text{Tr}}_{j, i>1},y_{j, 1}^\text{Sh},y_{j, i>1}^\text{Sh}\right]$}\\
      
      update $Q^\mathcal{C}$ on minibatch $b$ of size $m$ from $\mathcal{R}$ according to \Cref{eq:loss}\\ 
      update $\mu$ on $Q$\\
      adjust parameters of $Q^{\mathcal{C}\prime}$ and $\mu'$\\
    }
  }
\caption{Deep Continuous Composite Q-learning}
\label{alg:compositeddpg}
\end{algorithm}

\subsubsection{Entropy Regularization}
\label{subsubsec:ent}
Each pair $Q^{\text{Tr}}_i + Q^{\text{Sh}}_i|_{1\leq i\leq n}$ is a complete approximation of the true Q-value. Note however, that it is also bootstrapped with the full Q-value (cf. \Cref{fig:composite} and \Cref{eq:shiftedupdate}). The circular dependency can lead to stability issues, due to the amplification of propagated errors. Additionally, higher learning rates for the Shifted Value-functions, as motivated in \Cref{subsubsec:shiftedq}, may lead to overfitting. In particular, errors in the truncated predictions are propagated quickly by the Shifted Q-functions, especially when trained with a high learning rate. Therefore, it is important to keep the truncated predictions close across the different time steps while also preventing the Shifted Q-functions from running into suboptimal local minima. As a means to alleviate these issues, we add a novel regularization technique based on the entropy of a Gaussian distribution formed by mean $\mu(Q^{\text{Tr}}_i + Q^{\text{Sh}}_i|_{1\leq i\leq n})$ and variance $\sigma^2(Q^{\text{Tr}}_i + Q^{\text{Sh}}_i|_{1\leq i\leq n})$ over all $n$ complete predictions of $Q$. We can then formalize an incentive of the Truncated Q-functions to stay close between predictions whilst forcing the Shifted Q-functions to keep the distribution as broad as possible. We define the Gaussian distribution $\mathcal{N}_Q$ over all Q-predictions therefore by:

\begin{equation}\mathcal{N}_Q\left(\mu(Q^{\text{Tr}}_i + Q^{\text{Sh}}_i|_{1\leq i\leq n}), \sigma^2(Q^{\text{Tr}}_i + Q^{\text{Sh}}_i|_{1\leq i\leq n})\right),\end{equation}
with corresponding entropy: \begin{equation}\mathcal{H}_Q=\frac{1}{2}\log{\left(2\pi e\sigma^2(Q^{\text{Tr}}_i + Q^{\text{Sh}}_i|_{1\leq i\leq n})\right)}.\end{equation}
In order to enhance stability of the learning process as a whole while preventing the Shifted Q-functions from overfitting, we not only minimize the mean squared error between targets $y^\mathcal{C}_j$ and predictions $Q^\mathcal{C}(s_j,a_j|\theta^{\mathcal{C}})$, but apply gradient descent on the entropy for the parameters of $Q^\text{Tr}$ and gradient ascent on the parameters of $Q^\text{Sh}$.
More detailed, we define the squared error for some sample $j$ as:
\begin{equation}\delta_j = \left(y^\mathcal{C}_j - Q^\mathcal{C}(s_j, a_j|\theta^{\mathcal{C}})\right)^2,\end{equation}
and the gradient w.r.t. all parameters of $Q^\mathcal{C}$ including the different learning rates as:
\begin{equation}\xi_j=\alpha_Q\delta_j\nabla_{\theta^{Q}}Q^\mathcal{C}(s_j, a_j|\theta^{\mathcal{C}})+\alpha_\text{Tr}\delta_j\nabla_{\theta^{\text{Tr}}}Q^\mathcal{C}(s_j, a_j|\theta^{\mathcal{C}})+\alpha_\text{Sh}\delta_j\nabla_{\theta^{\text{Sh}}}Q^\mathcal{C}(s_j, a_j|\theta^{\mathcal{C}}).\end{equation}
The regularization then adds to the gradient:
\begin{equation}\eta_j=\mathcal{H}(s_j, a_j)(\alpha_\text{Sh}\beta_\text{Sh}\nabla_{\theta^{\text{Sh}}}-\alpha_\text{Tr}\beta_\text{Tr}\nabla_{\theta^{\text{Tr}}}).\end{equation}
Hence, the parameter update in its simplest form becomes:
\begin{equation}\label{eq:loss}\theta^{\mathcal{C}} \leftarrow \theta^{\mathcal{C}} + \frac{1}{m}\sum_j(\xi_j - \eta_j).\end{equation}
However, a more sophisticated optimizer such as Adam \citep{adam} can be applied analogously. A detailed description of Deep Continuous Composite Q-learning is given in \Cref{alg:compositeddpg}, where the adjustments of TD3 are omitted for simplicity. In order to transform \Cref{alg:compositeddpg} to its TD3-equivalent, Gaussian policy smoothing has to be added to all targets in \Cref{line:targets}, as well as taking the minimum prediction of two distinct critics for each target. Furthermore, actor and target networks have to be updated with delay.
\subsection{TD3(\texorpdfstring{$\Delta$}{-delta})}
\label{subsec:mbvdelta}
Another way to divide the value function into multiple time scales is TD($\Delta$) \citep{timescales}. To this point, it has only been applied in an on-policy setting. In favor of comparability, we extend TD($\Delta$) to Q-learning, yielding TD3($\Delta$). The main idea of TD($\Delta$) is the combination of different value functions corresponding to increasing discount values. Let $\gamma^\Delta$ denote a fixed ordered sequence of increasing discount values, i.e. $\gamma^\Delta=(\gamma_1, \gamma_2, \dots, \gamma_k)^\top|_{\gamma_{i>1} >\gamma_{i-1}}$.
\begin{algorithm}
  \small
  initialize critic $Q^{\Delta}$, actor $\mu$ and targets $Q^{\Delta\prime}$, $\mu^\prime$\\
  initialize replay buffer $\mathcal{R}$\\
  set discount values $\gamma^\Delta=(\gamma_0, \gamma_1, \dots, \gamma_k)^\intercal$\\
  \For{$episode=1..E$} {
    get initial state $s_1$\\
    \For{$t=1..T$}{
      apply action $a_t=\mu(s_t|\theta^\mu) + \xi$, where $\xi\sim\mathcal{N}(0, \sigma)$\\
      observe $s_{t+1}$ and $r_t$ and save transition $(s_t,a_t,s_{t+1},r_t)$ in $\mathcal{R}$\\
      calculate targets:\\
      \nonl\qquad$y^\gamma_{j,1} = r_j + \gamma_1 Q^{\prime}_{\gamma_1}(s_{j+1}, \mu^\prime(s_{j+1}|\theta^{\mu\prime})|\theta^{\Delta{\prime}})$\\
      \nonl\qquad$y^\gamma_{j, i>1} = (\gamma_i-\gamma_{i-1}) Q^\prime_{\gamma_{i-1}}(s_{j+1}, \mu^\prime(s_{j+1}|\theta^{\mu\prime})|\theta^{\Delta\prime})$\\
      \nonl\qquad\qquad\qquad$+\gamma_i W^\prime_{i}(s_{j+1}, \mu^\prime(s_{j+1}|\theta^{\mu\prime})|\theta^{\Delta\prime})$\\
      update $Q^{\Delta}$ on minibatch $b$ of size $m$ from $\mathcal{R}$\\
      update $\mu$ on $Q_{\gamma_k}$\\
      adjust parameters of $Q^{\Delta\prime}$ and $\mu^\prime$\\
    }
  }
\caption{Off-policy TD($\Delta$)}
\label{alg:ddpgdelta}
\end{algorithm}

We can then define delta functions $W_i$ as: \begin{equation}W_1 = Q_{\gamma_1} \text{ and } W_{i>1} = Q_{\gamma_i} - Q_{\gamma_{i-1}}.\end{equation}

Let $Q^\Delta(\cdot, \cdot|\theta^{\Delta})$ denote the function approximator estimating $Q_{\gamma_{1\leq i\leq k}}$ with parameters $\theta^{\Delta}$. Based on the derivations in~\citep{timescales}, the targets for Q-learning can be formalized as:
\begin{equation}
\begin{gathered}
  y^\gamma_{j,1} = r_j + \gamma_1 Q^\prime_{\gamma_1}(s_{j+1}, \mu^\prime(s_{j+1}|\theta^{\mu\prime})|\theta^{\Delta\prime}) \text{ and}\\
  y^\gamma_{j,i>1} = (\gamma_i - \gamma_{i-1}) Q^\prime_{\gamma_{i-1}}(s_{j+1}, \mu^\prime(s_{j+1}|\theta^{\mu\prime})|\theta^{\Delta\prime}) + \gamma_i W^\prime_i(s_{j+1}, \mu^\prime(s_{j+1}|\theta^{\mu\prime})|\theta^{\Delta\prime}).
\end{gathered}
\end{equation}
The authors suggest the use of $n$-step targets within TD($\Delta$) which is not easily applicable in an off-policy setting. In our experiments, we therefore compare our approach to single-step TD3$(\Delta)$. A detailed description of Off-policy TD($\Delta$) is given in \Cref{alg:ddpgdelta}. To transform Off-policy TD($\Delta$) to TD3($\Delta$), the adjustments as described in \Cref{subsubsec:ent} have to be applied analogously.

\section{Experimental Results}
\label{sec:experiments}
In this section, we evaluate Tabular and Deep Composite Q-learning. In the tabular case, we apply Composite Q-learning on deterministic and stochastic chain MDPs of varying horizon to illustrate and analyze the characteristics of the interplay between short-term and long-term predictions. We then discuss advantages of Deep Composite Q-learning under a noise-free reward for the two robot simulation tasks Walker2d-v2 and Hopper-v2 and evaluate Deep Composite Q-learning for Walker2d-v2, Hopper-v2 and Humanoid-v2 under a noisy reward function, comparing it to TD3 and TD3($\Delta$).

\subsection{Tabular Composite Q-learning}
\label{subsec:tabularq}
\subsubsection{Deterministic Chain}
We start out with an evaluation of the effect of composing the long-term return of multiple short-term predictions. To this end, we apply Composite Q-learning in the tabular case to the MDP of horizon $K$ given in \Cref{fig:motivmdp}. We compare Composite Q-learning to vanilla Q-learning, as well as multi-step Q-learning based on subtrajectories of the exploratory policy and imaginary rollouts of the target-policy with the true model of the MDP as a hypothetical lower bound on the required updates.\\
\begin{figure}[h]
  \begin{subfigure}{0.25\textwidth}
    \centering
    \resizebox{!}{7cm}{
\begin{tikzpicture}[->,>=stealth',shorten >=1pt,auto,node distance=2.8cm,
                    semithick]
  \tikzstyle{every state}=[]

  \node[state] (0) [minimum height=1.5cm]                    {\Large$s_0$};
  \node[state] (1) [below=0.75cm of 0, minimum height=1.5cm] {\Large$s_1$};
  \node[state] (2) [below=0.75cm of 1, minimum height=1.5cm] {\Large$s_2$};
  \node        (3) [below=0.75cm of 2, minimum width=1.5cm] {\Large$\dots$};
  \node[state] (4) [below=0.75cm of 3, minimum height=1.5cm] {\Large$s_{K-3}$};
  
  \node (mid) [below=0.75cm of 4, minimum height=1.5cm] {};
  
  \node[state] (5) [right=0.75cm of mid, minimum height=1.5cm, fill=black, text=white] {\Large$s_{K-2}$};
  \node[state] (6) [left=0.75cm of mid, minimum height=1.5cm, fill=black, text=white] {\Large$s_{K-1}$};

  \path (0.east) edge [right, very thick, out=330, in=30, red] node {\large$a, -1$}  (1.east)
        (1.east) edge [right, very thick, out=330, in=30, red] node {\large$a, -1$}  (2.east)
        (2.east) edge [right, very thick, out=330, in=30, red] node {\large$a, -1$}  (3.east)
        (3.east) edge [right, very thick, out=330, in=30, red] node {\large$a, -1$}  (4.east)
        (4.east) edge [right, very thick, out=0, in=90] node {\large$a, -100$}  (5.north)
        (4.north) edge [left, very thick] node {\large$b, -2$}  (3.south)
        (3.north) edge [left, very thick] node {\large$b, -2$}  (2.south)
        (2.north) edge [left, very thick] node {\large$b, -2$}  (1.south)
        (1.north) edge [left, very thick] node {\large$b, -2$}  (0.south)
        (0) edge [loop above, very thick] node {\large$b, -2$} (0)
        (0.west) edge [left, very thick, out=200, in=160, dotted] node {\large$c, -3$}  (2.west)
        (1.west) edge [left, very thick, out=200, in=160, dotted] node {\large$c, -3$}  (3.west)
        (3.west) edge [left, very thick, out=200, in=160, dotted] node {\large$c, -3$}  (4.west)
        (3.east) edge [right, very thick, out=0, in=0, dotted, bend left=90] node {\large$c, -30$}  (5.east)
        (4.west) edge [left, very thick, out=180, in=90, dotted, red] node {\large$c, -3$}  (6.north);

\end{tikzpicture}
    }
    \caption{}
    \label{fig:motivmdp}
  \end{subfigure}
  \begin{subfigure}{0.75\textwidth}
    \centering
    \includegraphics[height=7cm]{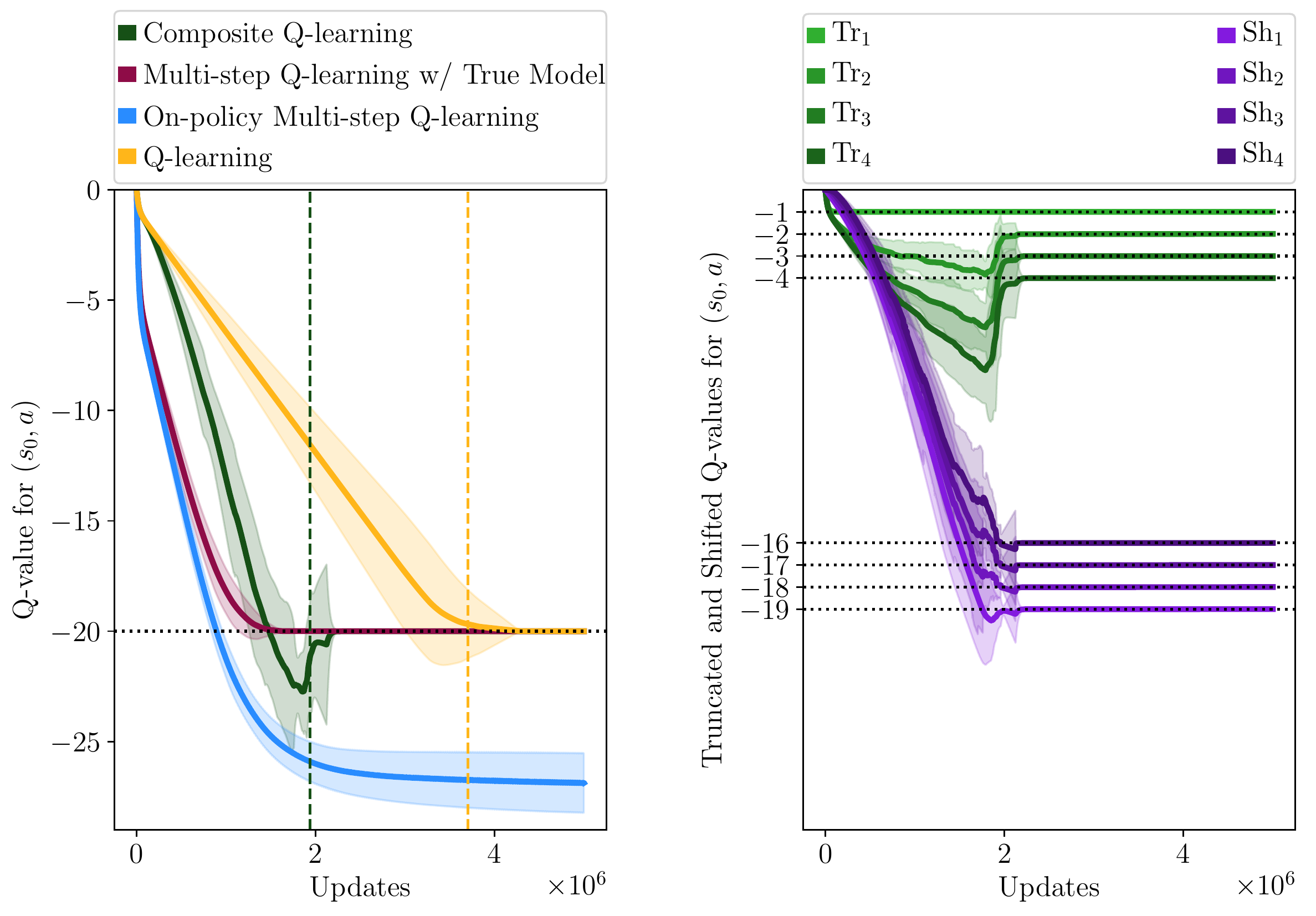}
    \caption{}
    \label{fig:motivmdpres}
  \end{subfigure}
  \caption{(a) In this deterministic-chain MDP of horizon $K$, the agent ought to arrive at terminal state $s_{K-1}$ using actions $\{a,b,c\}$. The initial state is $s_0$ and the optimal policy is given in red. (b) Mean results and two standard deviations over 10 runs on the MDP with a horizon of $K=20$. The left plot depicts the value of $s_0$ and action $a$ as estimated by the different approaches over time. Dashed lines indicate convergence to the optimal policy. The predicted Truncated Q-values for state $s_0$ and action $a$ with horizons 1 to 4, denoted by $\text{Tr}_1, \dots, \text{Tr}_4$, and predicted Shifted Q-values for state $s_0$ and action $a$, denoted by $\text{Sh}_1, \dots, \text{Sh}_4$, are to the right. Dotted lines indicate the true optimal respective Q-values.}
  
  \label{fig:motiv}
\end{figure}

Results for $K=20$ are depicted in \Cref{fig:motivmdpres}, where mean performances of the approaches are shown on the left and the intermediate predictions of Composite Q-learning on the right. All approaches update the Q-function with a learning rate of $10^{-3}$ on the same fixed batch of $10^3$ episodes with a percentage of $10\%$ non-optimal transitions. We set rollout
\begin{wrapfigure}{l}{0.4\textwidth}
    \centering
    \includegraphics[width=0.4\textwidth]{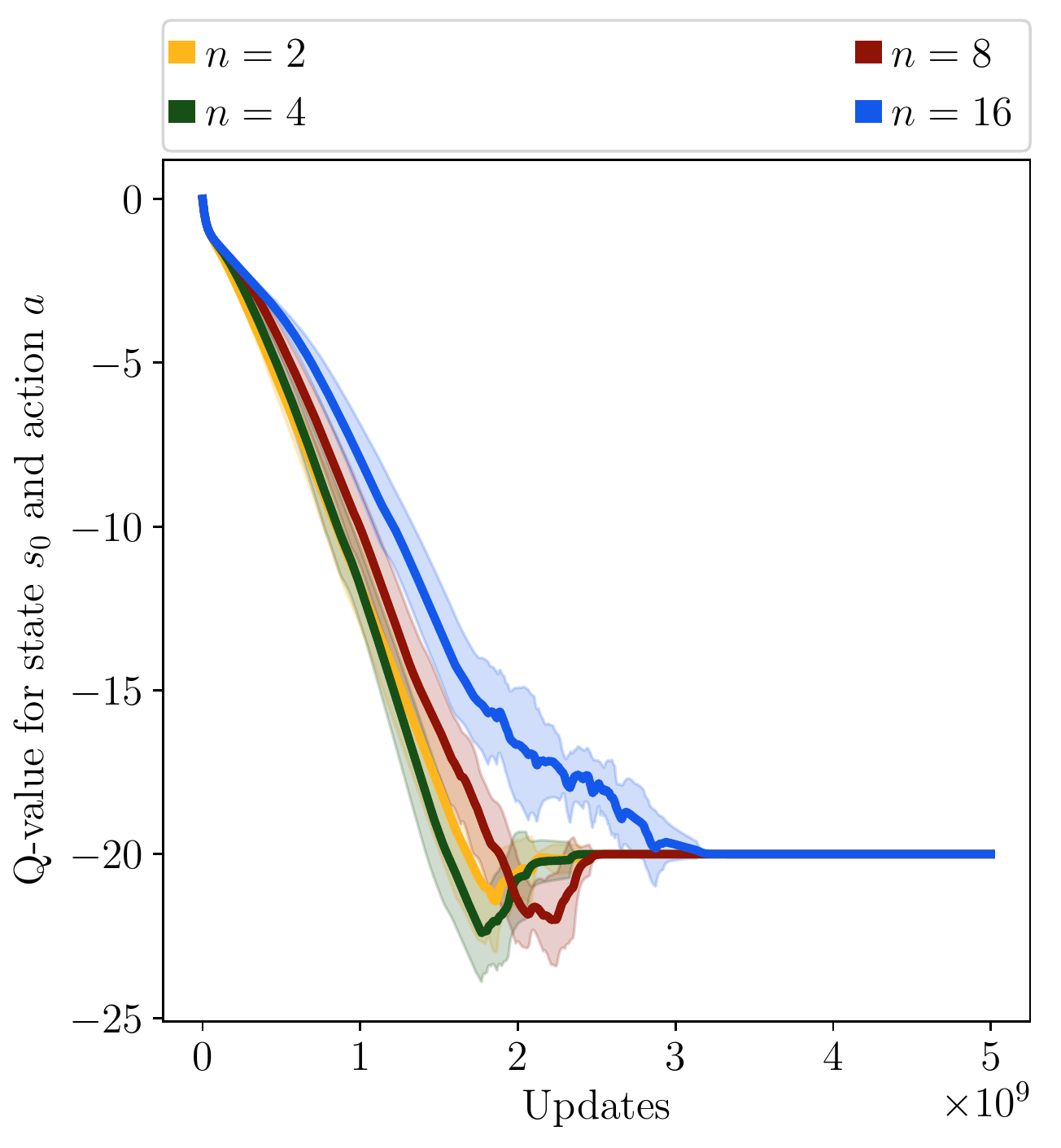}
    \caption{Performance of Composite Q-learning applied to the MDP in \Cref{fig:motivmdp} with different rollout lengths $n$.}
    \label{fig:rolloutsdet}
\end{wrapfigure}length $n=4$ for all respective approaches. Please note that this value can be set arbitrarily, however in order to be beneficial it should be set to a value smaller than the temporal horizon of the task. In this example, a rollout length of 4 corresponds to the integer square root of the horizon of the MDP, meaning that Truncated and Shifted Q-functions have a similar temporal horizon due to the partitioning of the long-term value. An evaluation of the performance for different rollout lengths $n$ is shown in \Cref{fig:rolloutsdet}. For the experiments in \Cref{fig:motivmdpres}, we update the Shifted Q-function with a learning rate of $10^{-2}$ and the Truncated Q-functions with a learning rate of $10^{-3}$. Composite Q-learning converges much faster to the true action-value than vanilla Q-learning and is close to the multi-step approach based on model rollouts (hypothetical lower bound). The erroneous updates of on-policy multi-step Q-learning lead to convergence to a wrong action-value which is underlining the importance of truly off-policy learning. Composite Q-learning estimates the short-term rollouts off-policy by design and shifts the long-term value in time so as to have a consistent temporal chaining. All intermediate predictions therefore converge to the true intermediate action-values, which is shown on the right side of \Cref{fig:motivmdpres}. The difference in convergence speed between Q-learning and Composite Q-learning grows with increasing horizon, as shown in \Cref{tab:horiz} for the described hyperparameter setting.\\
\begin{table}[h]
      \caption{Comparison of convergence speed between Tabular Q-learning and Tabular Composite Q-learning for exemplary runs on the MDP given in \Cref{fig:motivmdp} with $n=4$.}
      \small
      \centering
      \begin{tabular}{lcccc}
        \toprule
        Horizon $K$&10&20&50&100\\
        \midrule
        Speed up over Q-learning&$11\%$&$44\%$&$57\%$&$66\%$\\
        \bottomrule
      \end{tabular}
      \label{tab:horiz}
    \end{table}

Next, we present an evaluation of different learning rates for the Shifted Q-functions, depicted in \Cref{fig:motivmdpresshifted}. Here we compare Composite Q-learning and vanilla Q-learning to Shifted Q-learning, which corresponds to Q-learning with a one-step shifted target (without approximate $n$-step returns from a Truncated Q-function). The results show that shifting the value in time alone is slowing down convergence. Precisely, Shifted Q-learning with a learning rate of $1.0$ is equivalent to vanilla Q-learning; the same holds for Composite Q-learning with a learning rate of $10^{-3}$ for the Shifted Q-function. The results underline that the higher learning rates for the Shifted Q-function only have a beneficial effect in combination with truncated predictions.\\

 \begin{figure}[t]
  \centering
  \begin{subfigure}{0.49\textwidth}
  \includegraphics[width=\textwidth]{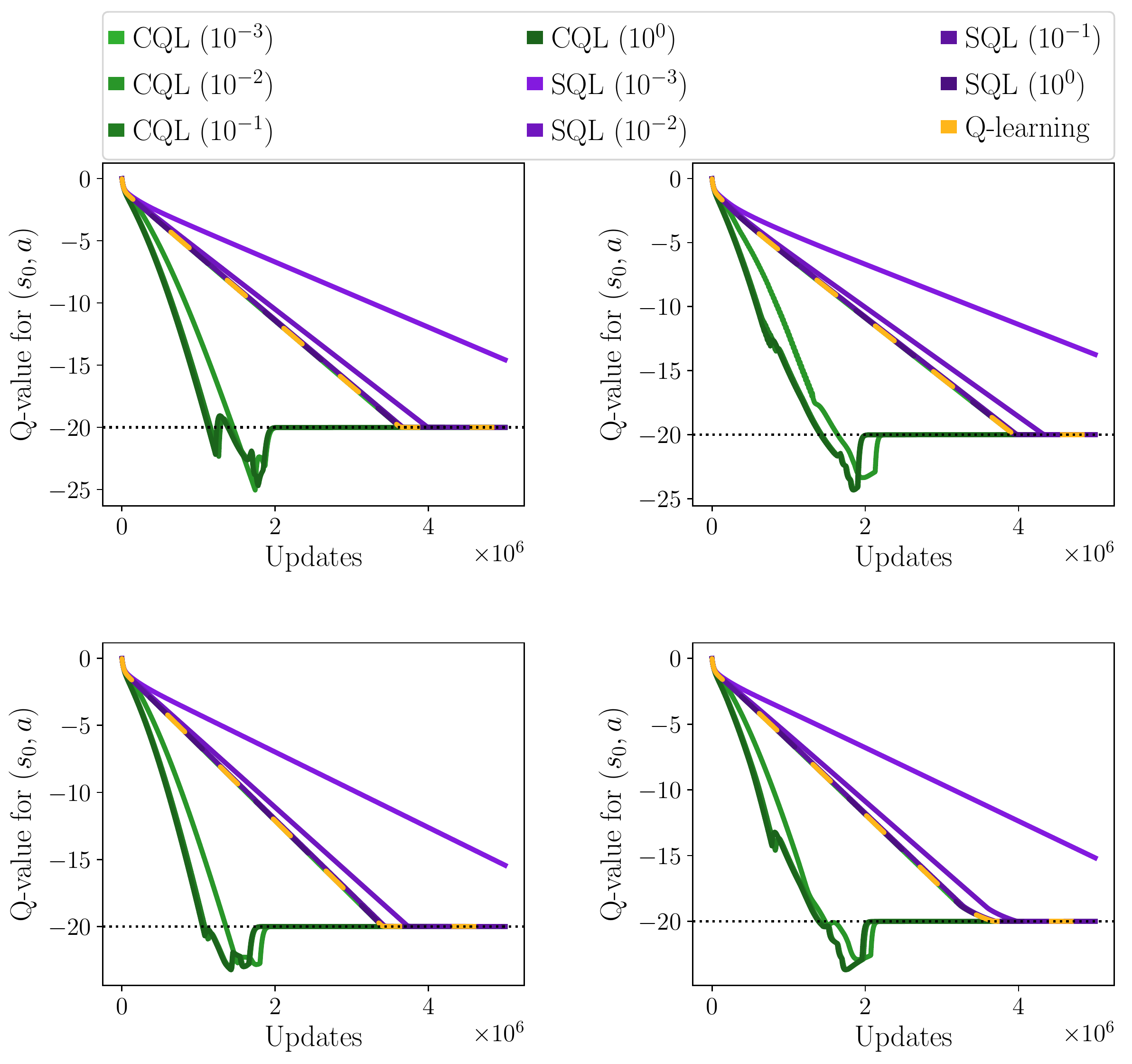}
  \caption{}
  \label{fig:motivmdpresshifted}
  \end{subfigure}
  \begin{subfigure}{0.49\textwidth}
  \includegraphics[width=\textwidth]{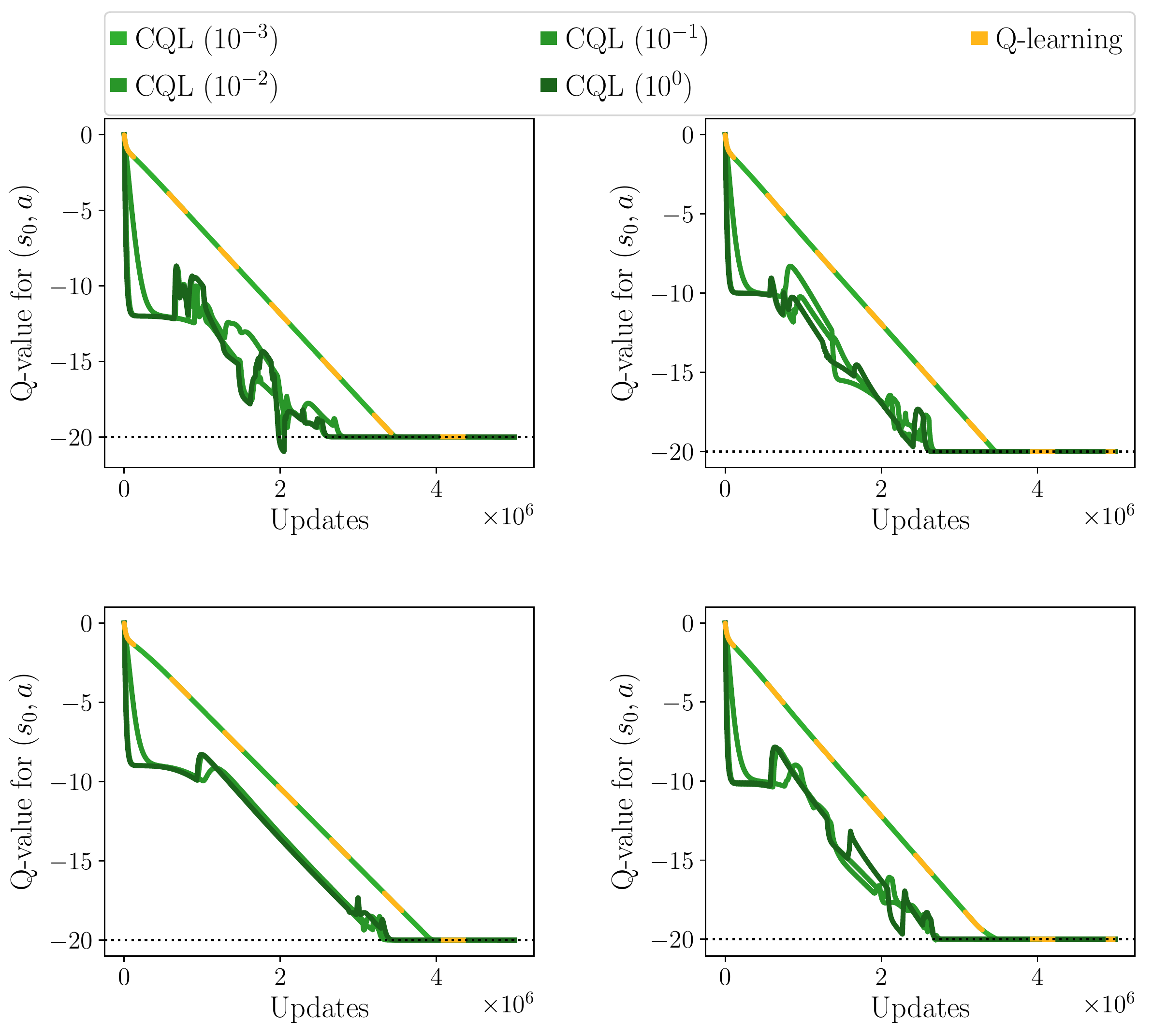}
  \caption{}
  \label{fig:motivmdprestruncated}
  \end{subfigure}
  \caption{Results of 4 individual runs on the deterministic chain MDP with a horizon of $K=20$ for Composite and Shifted Q-learning, (a) with different learning rates for the Shifted Q-function and (b) with different learning rates for the Truncated Q-function (denoted by the numbers in parentheses). The learning rates for the full Q-function and for the (a) Truncated Q-functions and the (b) Shifted Q-functions are set to $10^{-3}$ in all experiments.}
\end{figure}

The counterpart with different learning rates for the Truncated Q-functions,  keeping the learning rates for the full Q-estimate and Shifted Q-functions fixed, can be seen in \Cref{fig:motivmdprestruncated}. While there is improvement in convergence using a larger learning rate for the Truncated Q-functions, the results show higher variance and less benefit than the Shifted Q-functions in \Cref{fig:motivmdpresshifted}. The results suggest that the interplay between short- and long-term predictions yields the most benefit if the learning rate for the Shifted Q-functions can be set to a higher value. We believe this to be due to two possible reasons. The higher learning rates for the Truncated Q-functions might lead to overfitting on shorter horizons more quickly (given that it represents an easier learning problem compared to the full return), yet it is very important for those values to be of small error, since they build the basis for the temporal chaining completed by the Shifted Q-functions. On the other hand, the Shifted Q-functions allow for higher learning rates, since they only have to consider the variability of the distribution over next states, not immediate rewards, leading to decreased variance in their targets.

\subsubsection{Stochastic Chain}
We further investigate the effect of stochastic immediate rewards on Composite Q-learning on the chain MDP shown in \Cref{fig:mdp_stochastic}. All actions lead further into the chain, however they yield different expected rewards. Action $a$ has an expected reward of $-0.8$, corresponding to an immediate reward of $-1$ with 80\% chance and 0 with 20\%. Action $b$, on the other hand, has an expected reward of $-1.01$, however it is $+1$ with 99\% chance and $-200$ with 1\%. Composite Q-learning with learning rates of $10^{-2}$ for the full Q-function, $10^{-1}$ for the Shifted Q-functions and $10^{-3}$ for the Truncated Q-functions is compared to Q-learning with the same respective learning rates.\\

\begin{figure}[h]
\centering
\begin{subfigure}[c]{0.05\textwidth}
\caption{}\label{fig:mdp_stochastic}
\end{subfigure}%
\begin{minipage}[c]{0.95\textwidth}
    \resizebox{0.95\textwidth}{!}{
\begin{tikzpicture}[->,>=stealth',shorten >=1pt,auto,node distance=2.8cm,
                    semithick]
  \tikzstyle{every state}=[]

  \node[state] (0)                    {$s_0$};
  \node[state] (1) [right=6cm of 0]   {$s_1$};
  \node (2) [right=6cm of 1]   {\dots};
  \node[state] (3) [right=6cm of 2, fill=black, text=white]   {$s_{K-1}$};
  
  \node        (mid1) at ($(0)!0.5!(1)$)    {};
  \node[shape=circle,fill=black] (t1) [above=3cm of mid1]  {};
  \node[shape=circle,fill=black] (b1) [below=3cm of mid1]  {};

  \node        (mid2) at ($(1)!0.5!(2)$)    {};
  \node[shape=circle,fill=black] (t2) [above=3cm of mid2]  {};
  \node[shape=circle,fill=black] (b2) [below=3cm of mid2]  {};

    \node        (mid3) at ($(2)!0.5!(3)$)    {};
  \node[shape=circle,fill=black] (t3) [above=3cm of mid3]  {};
  \node[shape=circle,fill=black] (b3) [below=3cm of mid3]  {};
  

  \path (0) edge [red, very thick] node [above] {$a$} (t1)
        (0) edge [very thick, dotted] node [below] {$b$} (b1)
        (b1) edge [bend left, left, very thick] node {$0.99, +1$} (1)
        (b1) edge [bend right, right, very thick] node {$0.01, -200$} (1)
        (t1) edge [bend left, right, very thick] node {$0.8, -1$} (1)
        (t1) edge [bend right, left, very thick] node {$0.2, 0$} (1)

        (1) edge [red, very thick] node [above] {$a$} (t2)
        (1) edge [very thick, dotted] node [below] {$b$} (b2)
        (b2) edge [bend left, left, very thick] node {$0.99, +1$} (2)
        (b2) edge [bend right, right, very thick] node {$0.01, -200$} (2)
        (t2) edge [bend left, right, very thick] node {$0.8, -1$} (2)
        (t2) edge [bend right, left, very thick] node {$0.2, 0$} (2)

        (2) edge [red, very thick] node [above] {$a$} (t3)
        (2) edge [very thick, dotted] node [below] {$b$} (b3)
        (b3) edge [bend left, left, very thick] node {$0.99, +1$} (3)
        (b3) edge [bend right, right, very thick] node {$0.01, -200$} (3)
        (t3) edge [bend left, right, very thick] node {$0.8, -1$} (3)
        (t3) edge [bend right, left, very thick] node {$0.2, 0$} (3);

      \node (l0) [left=0.5cm of 0]                    {};
      \node (r0) [right=0.5cm of 3]                    {};
      \node (a0) [above=0.5cm of t1]                    {};
      \node (b0)                  [below=0.5cm of b1]  {};
\end{tikzpicture}
  }
\end{minipage}
\\[1ex]
\begin{subfigure}[c]{0.05\textwidth}
\caption{}\label{fig:stochchainres}
\end{subfigure}
\begin{minipage}[c]{0.95\textwidth}
\includegraphics[width=0.95\textwidth]{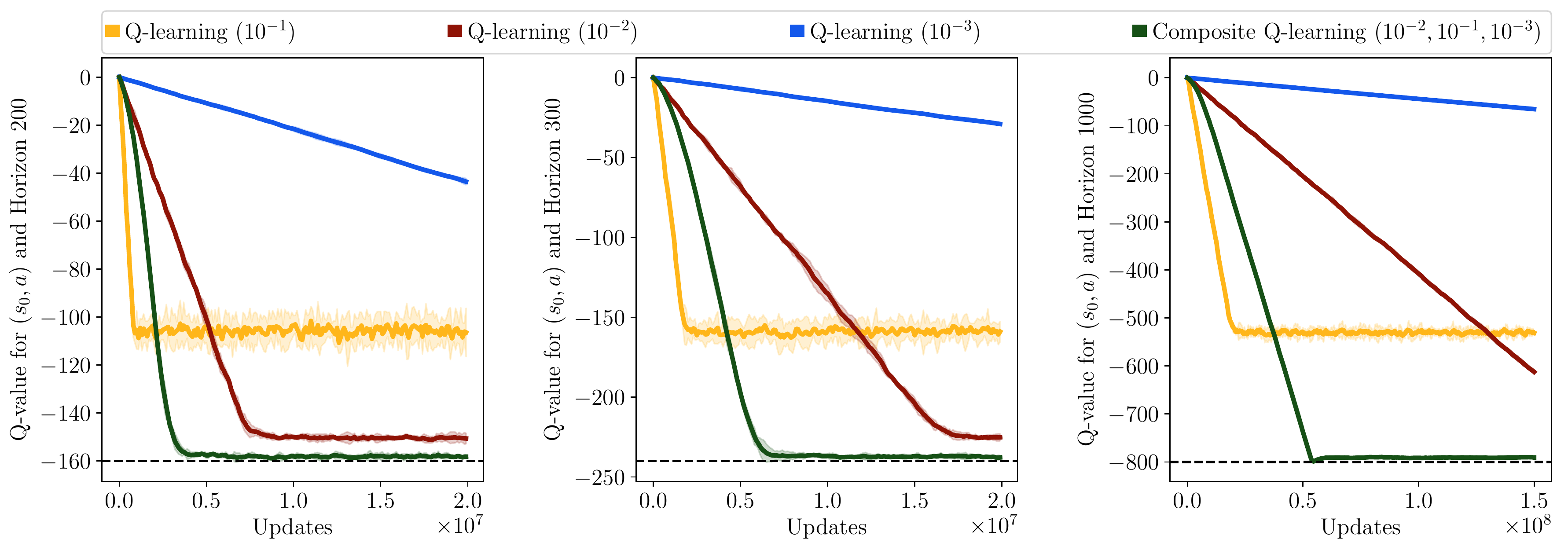}
\end{minipage}
  \caption{(a) In this stochastic chain MDP of horizon $K$, the agent ought to arrive at terminal state $s_{K-1}$ using actions $a$ and $b$. The initial state is $s_0$. Transitions are stochastic. The optimal policy is given in red. (b) Results of Composite Q-learning and Q-learning with different learning rates over 5 training runs for horizons (left) 200, (middle) 300 and (right) 1000 and no discount. The true action-value is indicated by the dashed line.}
\end{figure}

Results for different horizons and no discount are depicted in \Cref{fig:stochchainres}. It can be seen that Q-learning with a high learning rate is converging to a wrong action-value and the deviation becomes more significant the longer the horizon of the task. The same holds for Q-learning with a learning rate of $10^{-2}$, however the deviation becomes smaller. Moreover, Q-learning with the same learning rate as the full Q-function in Composite Q-learning is converging much slower for growing horizons, yet the learning rate for the Truncated Q-functions is 10 times smaller than for Q-learning. This emphasizes the mutual interplay of Truncated and Shifted Q-functions. Furthermore, Composite Q-learning is converging to the true optimal action-value. Q-learning with a small learning rate of $10^{-3}$ is far from achieving a satisfying level of convergence within the time frame of $2\cdot10^7$ updates. \begin{wrapfigure}{r}{0.4\textwidth}
    \centering
    \includegraphics[width=0.4\textwidth]{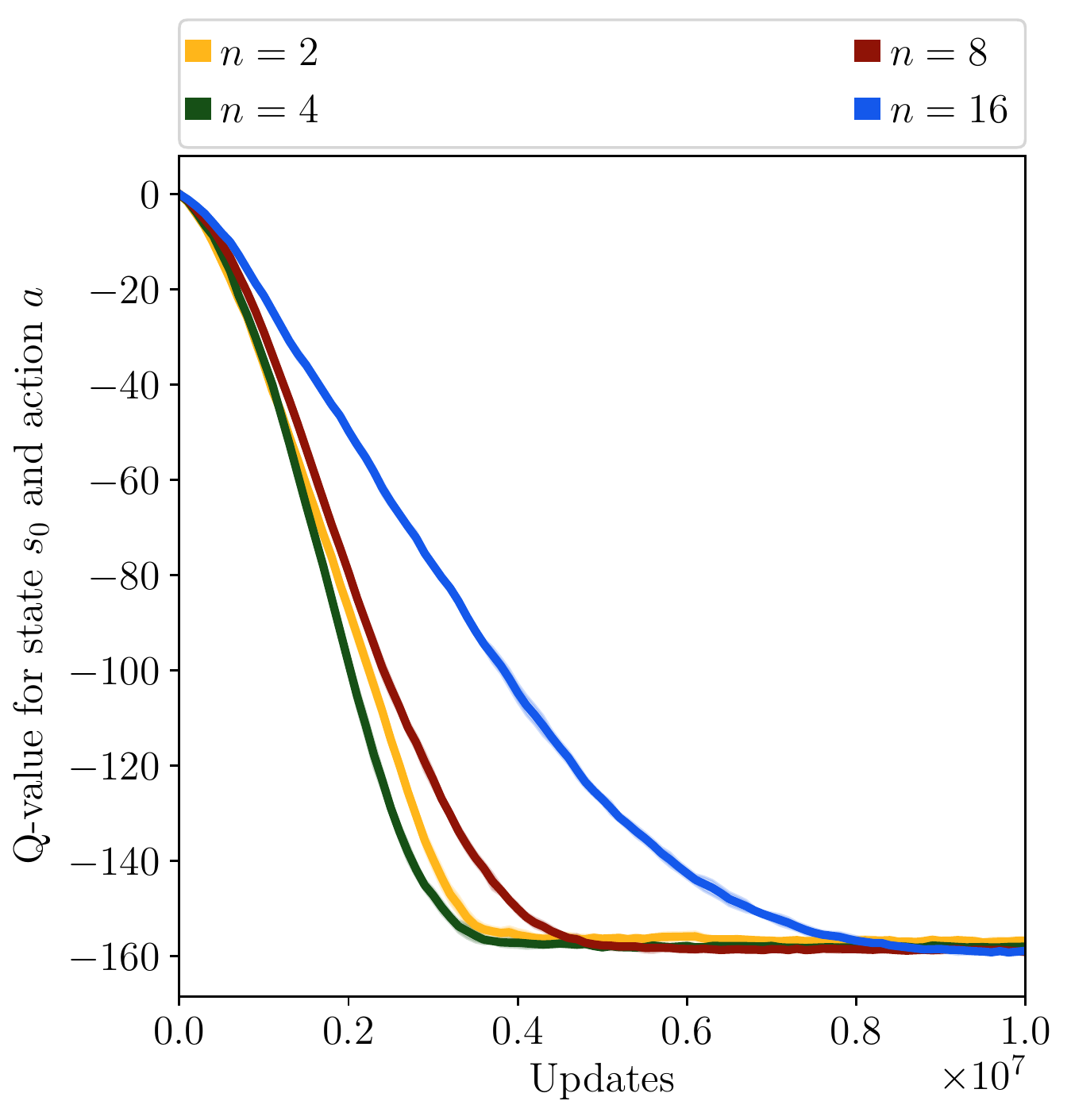}
    \caption{Performance of Composite Q-learning applied to the MDP in \Cref{fig:mdp_stochastic} with different rollout lengths $n$.}
    \label{fig:stochrollout}
\end{wrapfigure}
Shifted Q-functions thus allow for higher learning rates which in combination with a more cautious fitting of short-term predictions leads to increased data efficiency while achieving a smaller deviation from the true action-value. Most importantly, the difference in performance between Q-learning and Composite Q-learning (better value estimation w.r.t. to a high learning rate and better convergence properties comparing to a small learning rate) becomes larger for longer horizons. The performance for different rollout lengths are shown in \Cref{fig:stochrollout}.

\subsection{Composite Q-learning with Function Approximation}
\label{subsec:comfa}
Lastly, we apply Composite Q-learning within TD3 and compare against TD3 and TD3($\Delta$) on three robot simulation tasks of \mbox{OpenAI} Gym \citep{gym} based on MuJoCo \citep{mujoco}: Walker2d-v2, Hopper-v2 and Humanoid-v2. A visualization of the environments is depicted in \Cref{fig:vis}. We first briefly discuss the parameter settings and the hyperparameter optimization in \Cref{subsubsec:parameters}. Then we analyze the properties of Composite Q-learning in terms of data-efficiency and stability on the true reward for Walker2d-v2 and the potential increase of updates per sample for Walker2d-v2 and Hopper-v2 in \Cref{subsubsec:truerew}, before we evaluate Composite TD3 on a very noisy reward signal for Walker2d-v2, Hopper-v2 and Humanoid-v2 in \Cref{subsubsec:noisyrew}.

\begin{figure}[h]
  \centering
  \includegraphics[width=0.3\textwidth]{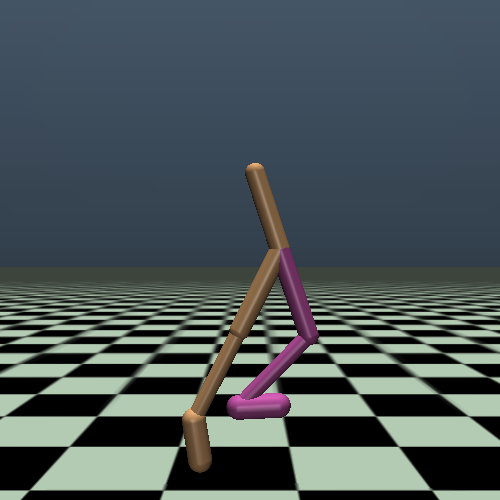}\hspace*{0.3cm}\includegraphics[width=0.3\textwidth]{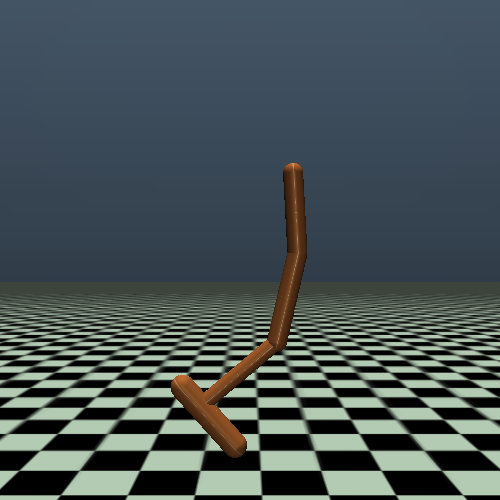}\hspace*{0.3cm}\includegraphics[width=0.3\textwidth]{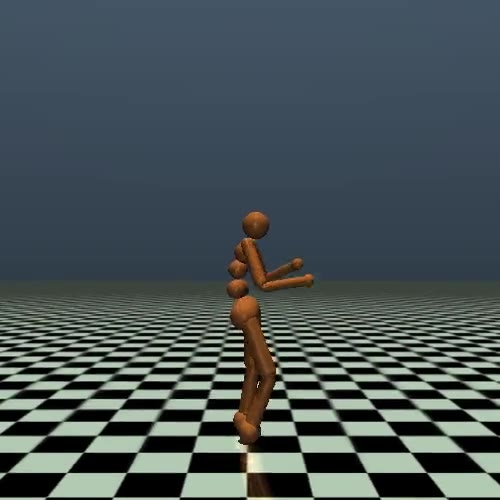}
  \caption{Visualization of Walker2d-v2 (left), Hopper-v2 (middle) and Humanoid-v2 (right).}
  \label{fig:vis}
\end{figure}

\subsubsection{Parameter Setting}
\label{subsubsec:parameters}

For all approaches, we use Gaussian noise with $\sigma=0.15$ for exploration and the optimized learning rate of $10^{-3}$  for the full Q-function. Target update ($5\cdot10^{-3}$) and actor setting (two hidden layers with 400 and 300 neurons and ReLU activation) are set as in \citep{fujimoto2018addressing}. For Humanoid-v2, we use a slightly changed parameter setting with a learning
\begin{wrapfigure}{l}{0.4\textwidth}
    \centering
    \resizebox{0.39\textwidth}{!}{%

\begin{tikzpicture}
\tikzset{>=Latex, line width=1pt}

\tikzstyle{box_dashed}=[rectangle, draw, rounded corners, dashed]
\tikzstyle{box_dotted}=[rectangle, draw, rounded corners, dotted]
\tikzstyle{element}=[draw, fill=black!10, text centered] 
\tikzstyle{element_rect}=[element, rounded corners, rectangle, minimum height = 1cm, minimum width = 1.5cm]
\tikzstyle{element_ell}=[element, ellipse]
\tikzstyle{element_circ}=[element, circle]
\tikzstyle{element_dia}=[element, diamond]
\tikzstyle{element_rect_split}=[element, rounded corners, rectangle split, rectangle split horizontal, rectangle split parts=2]

\node (state) [] {$s_t$};
\node (action) [right=0.5cm of state] {$a_t$};
\node (mid) at ($(state)!0.5!(action)$) {};
\node (layer2) [element_rect, minimum width = 5cm, below=0.7cm of mid] {2 fully connected layers (500, leaky ReLU)};
\node (mid) [below=0.8cm of layer2, minimum height = 1.2cm] {};
\node (layer3a) [element_rect, minimum width = 3cm, left=0.05cm of mid, align=center] {fully connected\\(500, leaky ReLU)};
\node (layer3b) [element_rect, minimum width = 3cm, right=0.05cm of mid] {$Q^{\text{Tr}}_1 Q^{\text{Tr}}_2\dots Q^{\text{Tr}}_n$};
\node (layer4a) [element_rect, minimum width = 2cm, below=0.7cm of layer3a, align=center] {fully connected\\(500, leaky ReLU)};
\node (layer4b) [element_rect, minimum width = 2.5cm, below=0.7cm of layer3b] {$Q^{\text{Sh}}_1 Q^{\text{Sh}}_2\dots Q^{\text{Sh}}_n$};

\node (Q) [element_rect,below=0.7cm of layer4a] {$Q$};

\draw [] (state.south) edge[->, thick, out=270, in=90] (layer2.north);
\draw [] (action.south) edge[->, thick, out=270, in=90] (layer2.north);
\draw [] (layer2.south) edge[->, thick, out=270, in=90] (layer3a.north);
\draw [] (layer2.south) edge[->, thick, out=270, in=90] (layer3b.north);
\draw [] (layer3a.south) edge[->, thick, out=270, in=90] (layer4a.north);
\draw [] (layer3a.south) edge[->, thick, out=270, in=90] (layer4b.north);
\draw [] (layer4a.south) edge[->, thick, out=270, in=90] (Q.north);
\end{tikzpicture}
    }%
    \caption{Optimized architecture of the Composite-Q network used in our experiments.}
    \label{fig:architecture}
\end{wrapfigure}
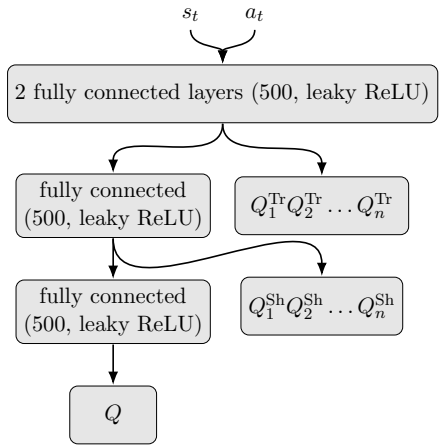
rate of $10^{-4}$ for both actor and critic as suggested in \citep{dorka2020dynamic}. For Composite Q-learning, we calculate the full Q-values, as well as Truncated and Shifted-Q values in one, combined, architecture for improved efficacy. Additionally, we found that it is beneficial to estimate the Truncated and Shifted Q-values in different layers, as depicted in \Cref{fig:architecture}. For all parameters in the layers prior to the full Q-output, we use the parameter setting as described above.\\

Hyperparameters are optimized for all approaches, including the baselines TD3 and TD3($\Delta$), for the given subset of MuJoCo tasks via Random Search over $10$ training runs with the configuration space shown in \Cref{tab:hyperopt}. Learning rates for the corresponding output layers of the Truncated and Shifted Q-values are optimized individually. We use a learning rate of $6\cdot10^{-5}$ for the Truncated Q-functions and $5\cdot10^{-3}$ for the Shifted Q-functions. For the noisy experiments, as well as for the experiments with multiple gradient steps per collected sample, we set $n=4$. For the Walker2d-v2 experiment on the true reward function, we set $n=10$. In terms of regularization, we set $\beta_\text{Tr}=0.002$ and $\beta_\text{Sh}=0.001$. For Humanoid-v2, we set the optimized learning rates one magnitude lower. When executing multiple gradient steps per collected sample, we keep the learning rate of the Shifted Q-functions at $10^{-3}$. We further evaluate TD3 with the same maximum learning rate. We optimize the number of layers and the number of neurons per layer for the critic in Composite Q-learning (4 layers with 500 neurons and leaky ReLU activation) and the architecture for the critic in all other approaches (two layers with 500 neurons and leaky ReLU activation). For an overview of the architecture of the Composite-Q network see \Cref{fig:architecture}. For TD3($\Delta$), we use the $\gamma$-schedule as suggested by \citeauthor{timescales}, i.e. $\gamma_1=0$ and $\gamma_{i>1}=\frac{\gamma_{i-1}+1}{2}$, with an upper limit of 0.99.

\begin{table}[h]
    \centering
    \caption{Configuration space of the hyperparameter optimization. For the full Q-function in all approaches, we used the optimized learning rate of $10^{-3}$ as in \citep{fujimoto2018addressing}. Hyperparameters denoted by $^*$ were optimized individually for all approaches.}
    \begin{tabular}{cc}
        \toprule
        Hyperparameter &  Values\\
        \midrule
        number of layers$^*$ & $\{2,3,4\}$\\
        number of neurons$^*$ & $\{300, 400, 500\}$\\
        activation$^*$ & $\{\text{ReLU}, \text{leaky ReLU}\}$\\
        $\alpha_\text{Tr}$&$\{6\cdot10^{-6}, 10^{-5}, 6\cdot10^{-5}, 10^{-4}, 10^{-3}\}$\\
        $\alpha_\text{Sh}$&$\{10^{-3}, 2\cdot10^{-3}, 5\cdot10^{-3}, 10^{-2}\}$\\
        $\beta_\text{Tr}$&$\{10^{-3}, 2\cdot10^{-3}, 4\cdot10^{-3}\}$\\
        $\beta_\text{Sh}$&$\{5\cdot10^{-4}, 10^{-3}, 2\cdot10^{-3}\}$\\
        $n$&$\{3,4,10,15,20\}$\\
        \bottomrule
    \end{tabular}
    \label{tab:hyperopt}
\end{table}

\subsubsection{True Reward Function}
\label{subsubsec:truerew}

We compare Composite TD3, TD3($\Delta$) and conventional TD3 with both default and high learning rate (TD3 HL) on the true reward for the Walker2d-v2 environment in \Cref{fig:singlewalker}. Composite TD3 and TD3($\Delta$) yield a better and more stable performance than TD3 with either learning rate, with Composite TD3 being the best performing of all approaches. The differences between conventional TD3 and Composite TD3 are significant, as shown in \Cref{tab:significancenonoise}. TD3 is not able to gain performance with higher learning rate and even degenerates. Both compositional Q-learning approaches show less variance.\\

\begin{figure}[h]
  \centering
  \begin{subfigure}{0.33\textwidth}
    \includegraphics[width=\textwidth]{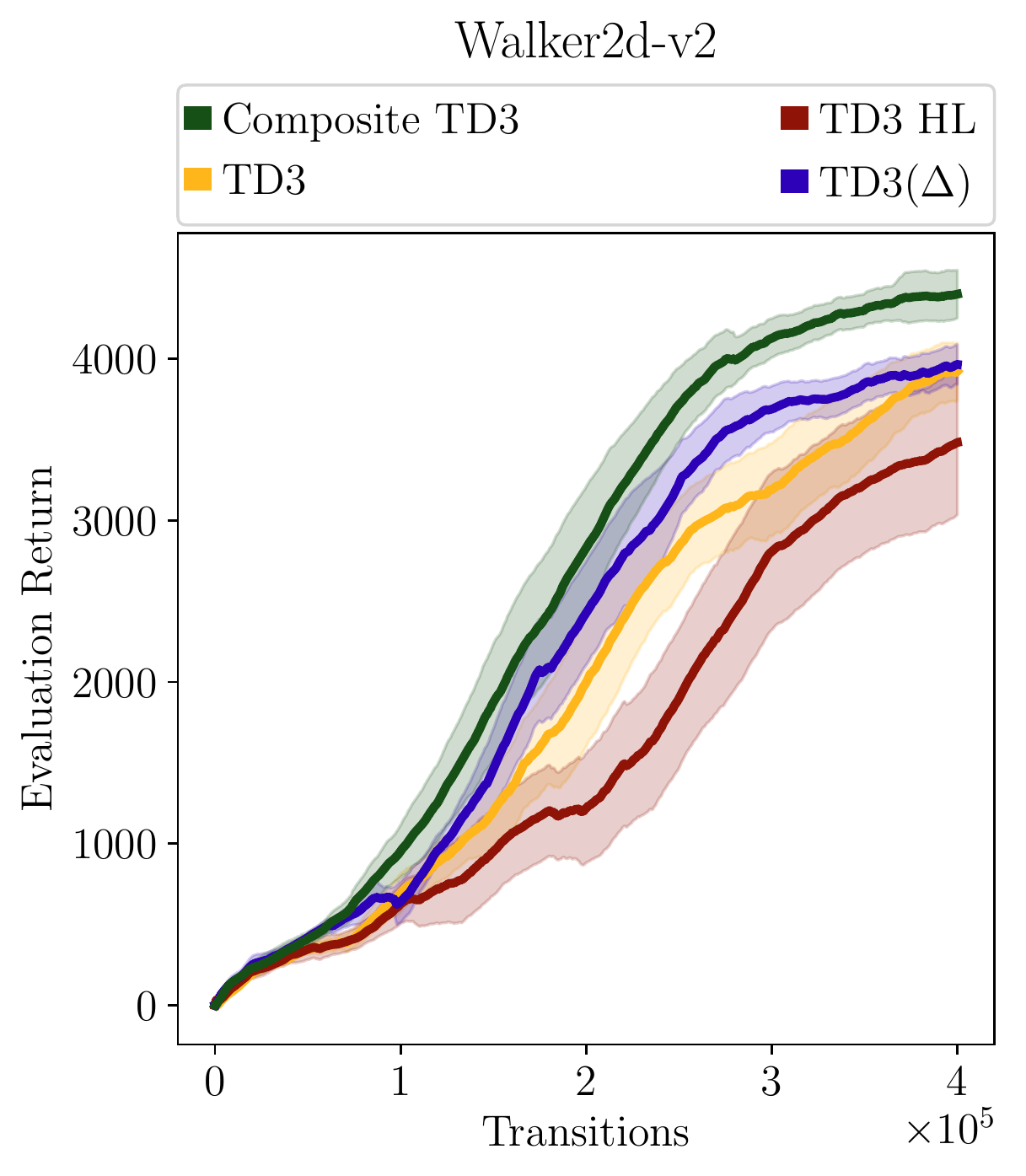}
    \caption{}
    \label{fig:singlewalker}
  \end{subfigure}
  \begin{subfigure}{0.66\textwidth}
    \includegraphics[width=\textwidth]{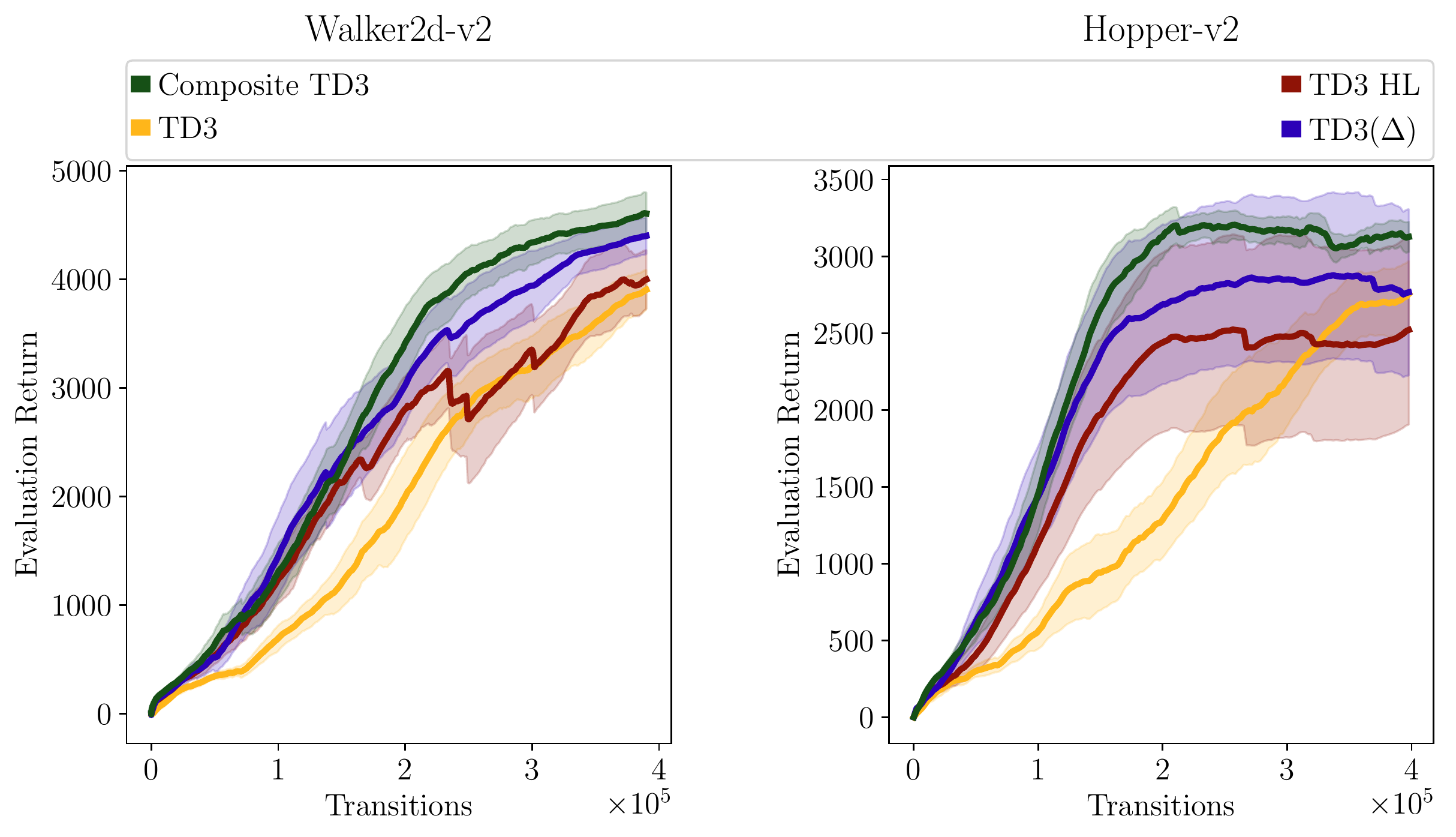}
    \caption{}
    \label{fig:mgs}
  \end{subfigure}
  \caption{(a) Mean performance and half a SD for the Walker2d-v2 environment with the default reward. (b) Results of Composite Q-learning on the vanilla reward function and multiple update steps per collected sample for the Walker2d-v2 and Hopper-v2 environment.}
  \label{fig:seven}
\end{figure}

\begin{table}[h]
    \centering
    \caption{p-values for Welch's t-test on the mean area under the learning curve for 8 different runs of Composite TD3 in the noisy reward experiments. Subparts (a) and (b) correspond to those in \Cref{fig:seven}.}
    \begin{tabular}{cccc}
        \toprule
         &TD3&TD3 (HL)& TD3($\Delta$)\\
         \midrule
         Walker2d-v2 (a)&$8\cdot10^{-3}$&$7\cdot10^{-4}$&$6\cdot10^{-2}$\\
         Walker2d-v2 (b)&$3\cdot10^{-4}$&$2\cdot10^{-2}$&$5\cdot10^{-1}$\\
         Hopper-v2 (b)&$3\cdot10^{-5}$&$4\cdot10^{-2}$&$5\cdot10^{-1}$\\
         \bottomrule
    \end{tabular}
    \label{tab:significancenonoise}
\end{table}

The influence of the horizon of Truncated and Shifted Q-functions, as well as the respective regularization weights maximizing and minimizing the entropy, is given exemplary for the Walker2d-v2 environment in \Cref{fig:betaroll}. It can be seen that the regularization of the Shifted Q-functions has a higher influence on the performance. If the weight is set too high, the variance increases significantly. The same holds for a too long temporal horizon. The regularization of the Truncated Q-functions has less influence on convergence and variance, however, please note the strong connection of these hyperparameters. Therefore, the best triad setting needs to be found for a given problem and yields an interesting direction for meta-reinforcement learning. Prior experiments with other regularization terms only minimizing the entropy also showed enhanced stability, however, it is important to prevent the Shifted Q-functions from overfitting when applying higher learning rates in the TD3 setting.\\

\begin{figure}[t]
  \centering
  \includegraphics[width=\textwidth]{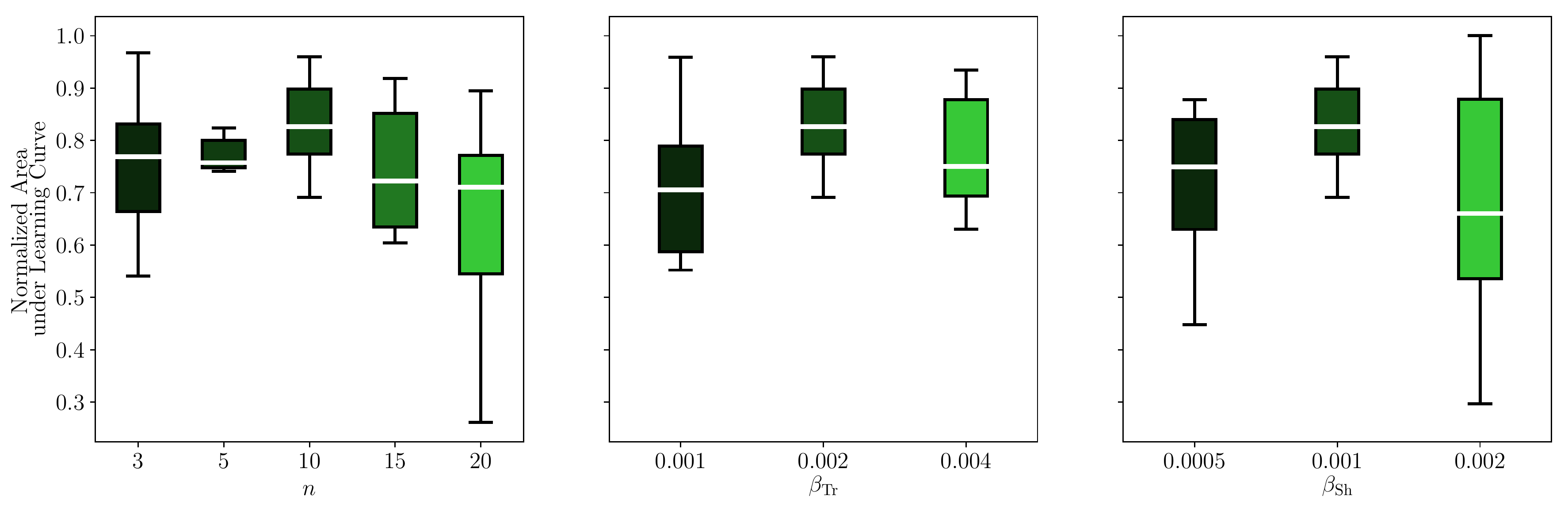}
  \caption{Normalized area under the learning curve for Composite TD3 in the Walker2d-v2 environment with different truncation horizons $n$ (left) and different regularization weights $\beta^\text{Tr}$ (middle) and $\beta^\text{Sh}$ (right). The plots show median and interquartile ranges over 8 training runs, each representing mean evaluation performance over 100 initial states.}
  \label{fig:betaroll}
\end{figure}

In the last experiments under the true reward, we investigate the potential increase in data-efficiency when applying multiple gradient steps per collected sample. Results can be seen in \Cref{fig:mgs}, with corresponding significance tests provided in \Cref{tab:significancenonoise}. While there seems to be a limit for TD3 to gain new knowledge from multiple gradient steps, both compositional Q-learning approaches yield a faster learning speed.

In addition, Composite Q-learning has no increase in variance opposed to all other approaches. The high learning rate in combination with multiple gradient steps is harmful w.r.t. learning stability.

\subsubsection{Noisy Reward Function}
\label{subsubsec:noisyrew}
\begin{figure}[t]
  \centering
  \includegraphics[width=\textwidth]{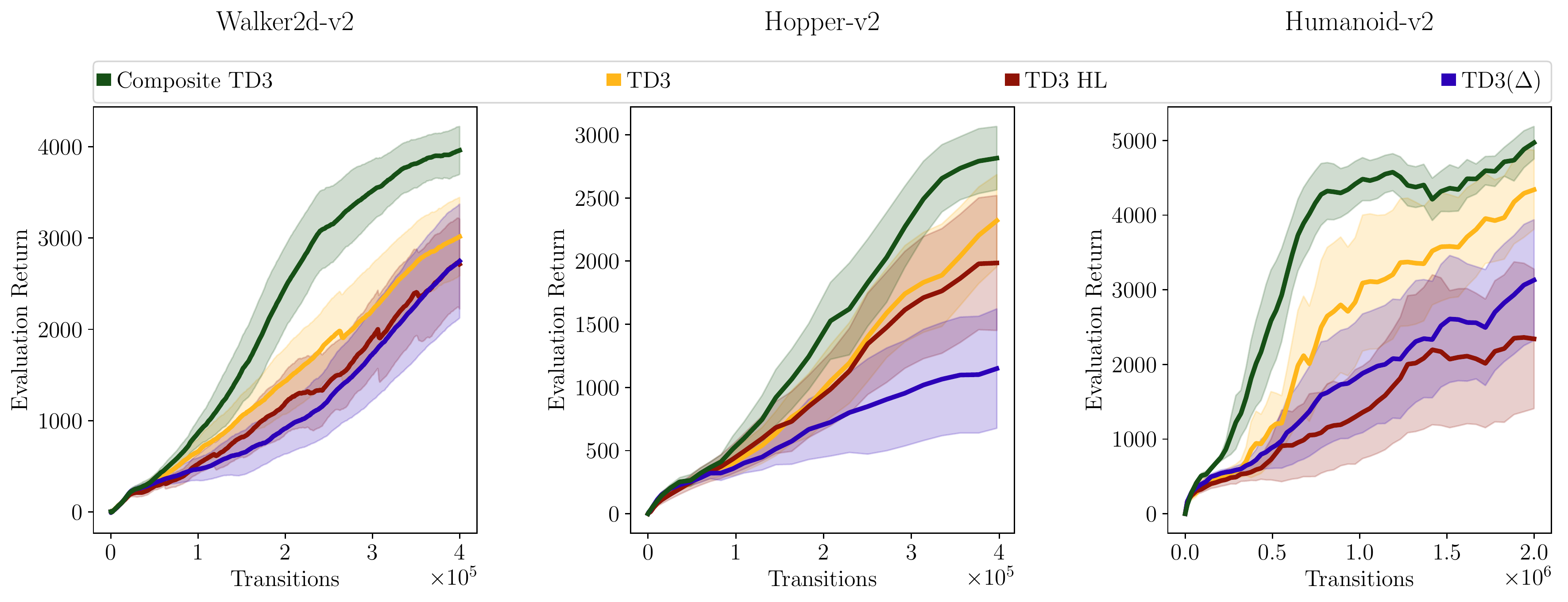}
  \caption{Mean performance and half a SD over 8 runs for (left) Walker2d-v2, (middle) Hopper-v2 and (right) Humanoid-v2 with uniform noise on the reward function as in \citep{timescales}.}
  \label{fig:noisyreward}
\end{figure}
\begin{table}[t]
    \centering
    \caption{p-values for Welch's t-test on the mean area under the learning curve for 8 different runs of Composite TD3 in the noisy reward experiments.}
    \begin{tabular}{cccc}
        \toprule
         &TD3&TD3 (HL)& TD3($\Delta$)\\
         \midrule
         Walker2d-v2&$4\cdot10^{-5}$&$2\cdot10^{-8}$&$6\cdot10^{-8}$\\
         Hopper-v2&$8\cdot10^{-3}$&$8\cdot10^{-3}$&$2\cdot10^{-6}$\\
         Humanoid-v2&$3\cdot10^{-2}$&$4\cdot10^{-4}$&$5\cdot10^{-6}$\\
         \bottomrule
    \end{tabular}
    \label{tab:significance}
\end{table}
We evaluate Composite TD3 under a noisy reward function and compare to TD3, TD3 with high learning rate and TD3($\Delta$). The immediate reward is replaced by a uniform sample $u\sim\mathcal{U}[-1, 1]$ with 40\% chance. This corresponds to the strongest noise level described in \citep{timescales}. In order to account for the high variance, we provide mean and standard deviation over 8 runs. Results can be seen in \Cref{fig:noisyreward}, corresponding significance tests are provided in \Cref{tab:significance}. Composite TD3 seems to be very robust, even for the very complex Humanoid-v2 environment. All other approaches suffer from slower learning speed and high variance. This holds especially for TD3 with high learning rate and TD3($\Delta$). We believe this to be caused by overfitting to the noisy reward function due to the high learning rate in TD3 and due to the low gamma of the first value-functions in TD3($\Delta$). Since all other predictions only add to the prediction of the preceding head, there is nothing to prevent the overestimation bias from being propagated to the other outputs of higher discount value. In Composite Q-learning however, Shifted and Truncated Q-functions estimate distinct parts of the temporal chain. This is also highlighted in \Cref{fig:errors}, where the TD-errors of Shifted and Truncated Q-functions can be seen along with learning curves of Humanoid-v2. Whilst in the default setting with vanilla reward the Truncated Q-functions are converging much faster than the Shifted Q-function due to the simpler problem induced by the smaller horizon, the opposite holds in case of the noisy reward. In these experiments, the Truncated Q-functions are not capable of an accurate prediction within the given time frame, yet the Shifted Q-functions can construct an accurate chain given those inaccurate predictions reliably leading to a well-performing policy. This backs our hypothesis that Shifted Q-functions only need to smooth out the variance of the transitions whereas Truncated Q-functions have to account for variance in the reward as well, which adds to the complexity of the problem significantly. By breaking down the time scales of the long-term return, Composite Q-learning makes a huge step towards the real-world application of Q-learning where a reliable value of the immediate reward can often not be provided. This translates to faster learning, which is also shown in \Cref{tab:results}.\\

\begin{figure}[t]
  \centering
  \begin{subfigure}{0.49\textwidth}
    \includegraphics[width=\textwidth]{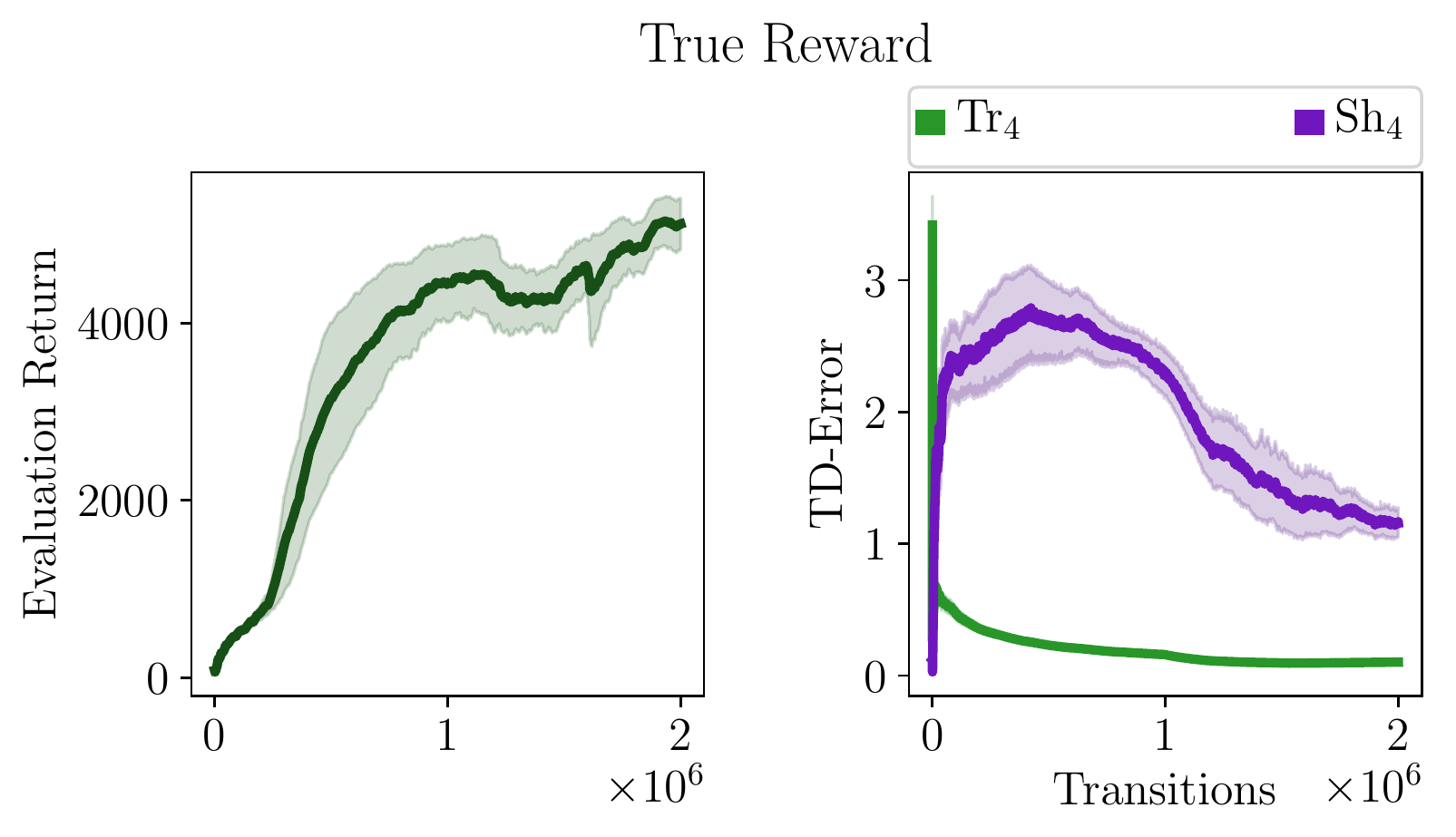}
    \caption{}
  \end{subfigure}
  \begin{subfigure}{0.49\textwidth}
    \includegraphics[width=\textwidth]{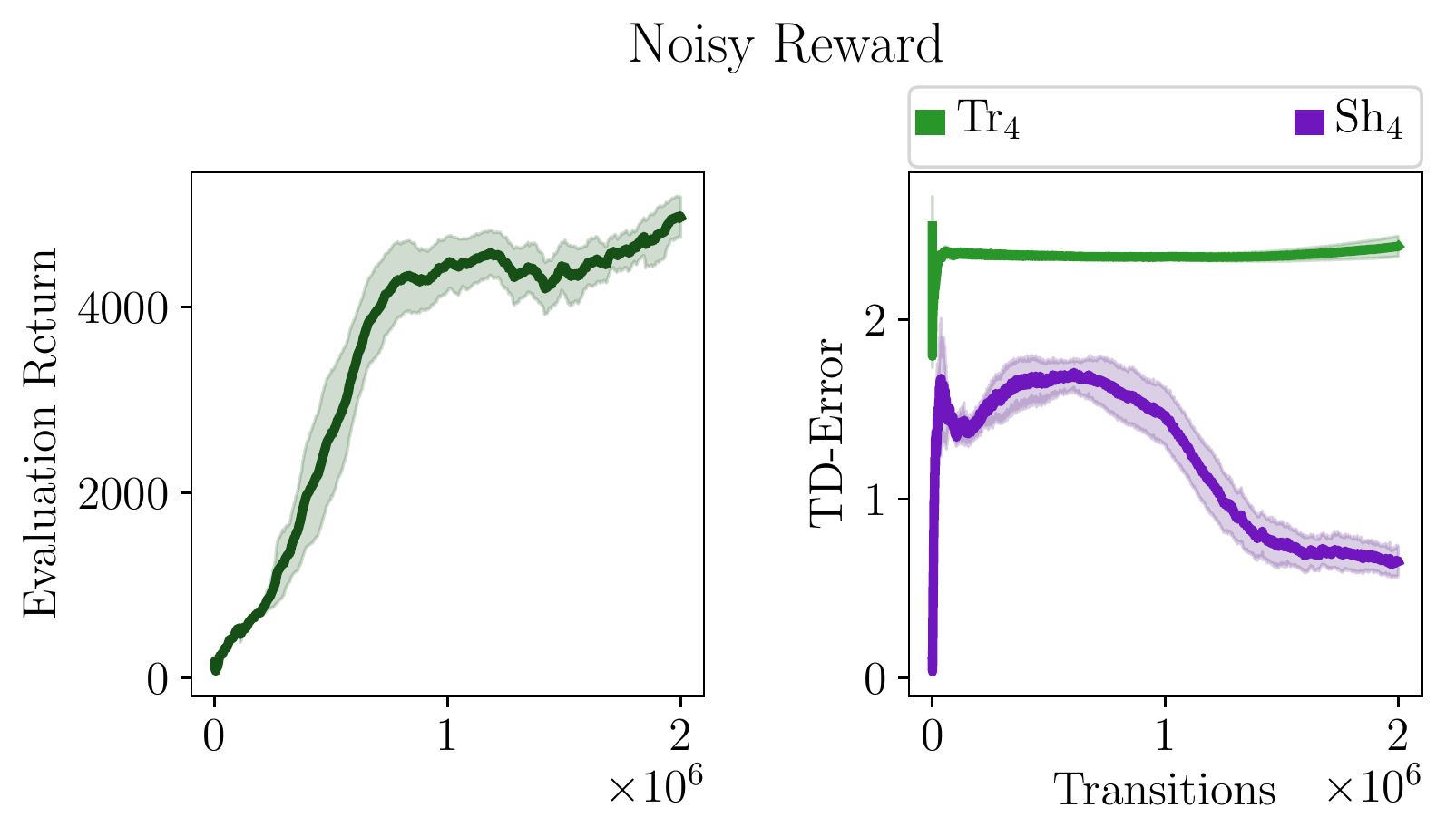}
    \caption{}
  \end{subfigure}
  \caption{Performance and TD-errors on Humanoid-v2 of Truncated and Shifted Q-functions at $n=4$ w.r.t. the (a) vanilla and (b) noisy reward. Please note that \textit{TD-error} here means the deviation from the associated target.}
  \label{fig:errors}
\end{figure}

\begin{table}[h]
  \caption{Mean normalized area under the learning curve and SD over 8 training runs of Composite TD3 in the noisy reward experiments.}
  \small
  \centering
  \begin{tabular}{ccccc}
    \toprule
    Method&Walker2d-v2&Hopper-v2&Humanoid-v2\\
    \midrule
    TD3&$68\%\pm22\%$&$76\%\pm28\%$&$68\%\pm30\%$\\
    TD3 (HL) &$56\%\pm18\%$&$72\%\pm37\%$&$38\%\pm29\%$\\
    TD3($\Delta$)&$52\%\pm23\%$&$49\%\pm31\%$&$56\%\pm26\%$\\
    \midrule
    Composite TD3&$\bm{100\%\pm20\%}$&$\bm{100\%\pm25\%}$&$\bm{100\%\pm16\%}$\\
    \bottomrule
  \end{tabular}
  \label{tab:results}
\end{table}
\Cref{tab:maxreturn} shows the mean maximum return achieved by Composite TD3, TD3 with default and high learning rate and TD3($\Delta$). Composite TD3 reaches significantly higher returns with lowest variance compared to vanilla TD3 when applied to a noisy reward function, especially for the most complex environment Humanoid-v2.
\begin{table}[h]
  \caption{Mean maximum return and SD over 8 training runs for the noisy reward experiments.}
  \small
  \centering
  \begin{tabular}{cccc}
    \toprule
    Method&Walker2d-v2&Hopper-v2&Humanoid-v2\\
    \midrule
    TD3&$3063\pm813$&$2386\pm729$&$4453\pm1070$\\
    TD3 (HL) &$2950\pm866$&$2124\pm1054$&$2506\pm2004$\\
    TD3($\Delta$)&$2772\pm1245$&$1242\pm954$&$3251\pm1687$\\
    \midrule
    Composite TD3&$\bm{4041\pm476}$&$\bm{2931\pm457}$&$\bm{5019\pm404}$\\
    \bottomrule
  \end{tabular}
  \label{tab:maxreturn}
\end{table}

Our experimental results show that Composite Q-learning offers a significant improvement over traditional Q-learning methods and a great potential for real-world applications where the extraction of the immediate reward signal can be noisy due to complex training setups or fitted reward functions.

\section{Conclusion}
\label{sec:conclusion}
We introduced Composite Q-learning, an off-policy reinforcement learning method that divides the long-term value into smaller time scales. It combines Truncated Q-functions acting on a short horizon with Shifted Q-functions for the remainder of the rollout. We analyzed the efficacy of Composite Q-learning in the tabular case and showed that the benefit of long-term composition of short-term predictions increases with growing task horizon. Additionally, we evaluated the performance of Composite Q-learning in the deep case. As a baseline, we further introduced and evaluated TD3($\Delta$), an off-policy variant of TD($\Delta$). We showed on three simulated robot tasks that compositional Q-learning methods can be of advantage w.r.t. data-efficiency compared to vanilla Q-learning methods and that Composite TD3 outperforms vanilla TD3 by 24\% - 32\% in terms of area under the learning curve. In addition, Composite Q-learning proved to be very robust to noisy reward signals. Going forward, the uncertainty estimate based on the variance of all predictions from the Composite Q-function could be of benefit in update calculation, transition sampling and exploration. We further believe our method to be a better fit for non-stationary reward functions than traditional Q-learning methods due to the flexibility provided by the divided long-term return. In addition, the formulation of Shifted Q-functions could serve as an implicit dynamics model to further build the bridge between model-based and model-free methods.

\bibliography{composite_q}

\end{document}